\documentclass[journal, lettersize]{IEEEtran}
\usepackage{dsfont, adjustbox}
\usepackage{cuted}
\usepackage{algpseudocode}
\usepackage{tablefootnote}
\usepackage{amsthm, xpatch, bm} 
\usepackage[thinc]{esdiff}
\usepackage[bookmarks=false]{hyperref} 
\hypersetup{
     colorlinks = true,
     linkcolor = blue,
     anchorcolor = black,
     citecolor = blue,
     filecolor = blue,
     urlcolor = black,
}
\usepackage[justification=centering]{caption}
\usepackage{subcaption}
\usepackage[font=small,skip=0pt]{subcaption}
\usepackage{url}
\usepackage{cite}
\usepackage{xcolor}
\usepackage[nolist]{acronym}
\usepackage{graphicx, graphics} 
\usepackage{float}
\usepackage{threeparttable}
\usepackage{cancel}
\usepackage[official]{eurosym}
\usepackage{fancyhdr}
\usepackage{comment}
\usepackage{chemformula}
\usepackage{setspace}
\usepackage[T1]{fontenc} 
\usepackage{amsmath, mathtools, tcolorbox}
\usepackage{mathptmx}
\usepackage{cleveref}
\usepackage[cmintegrals]{newtxmath}
\usepackage{bm} 
\usepackage[colorinlistoftodos,prependcaption,textsize=tiny]{todonotes}
\usepackage{amsmath,amsfonts}
\usepackage{empheq}
\usepackage{array}
\usepackage{textcomp}
\usepackage{stfloats}
\usepackage{url}
\usepackage{verbatim}
\def\BibTeX{{\rm B\kern-.05em{\sc i\kern-.025em b}\kern-.08em
    T\kern-.1667em\lower.7ex\hbox{E}\kern-.125emX}}
\usepackage{balance}
\usepackage{diagbox}
\usepackage{algorithm}
\usepackage{pdfrender}
\usepackage{xcolor}
\usepackage{glossaries}
\usepackage{makecell}
\usepackage[usestackEOL]{stackengine}
\usepackage{array, multirow, cellspace}
\usepackage{pifont}
\usepackage{lipsum, tabularx, ragged2e, multirow}
\usepackage[utf8]{inputenc}
\usepackage{enumitem}
\usepackage{wrapfig}
\usepackage{colortbl,hhline}
\usepackage{flushend}
\usepackage{balance}
\usepackage{booktabs}
\usepackage{cleveref}
\usepackage[normalem]{ulem}
\usepackage[switch, columnwise]{lineno}
\usepackage{blindtext}

\def\({\left(}
\def\){\right)}

\def\[{\left[}
\def\]{\right]}

\usepackage{glossaries}

\usepackage{soul}

\newtcolorbox{boxIRQ}{
    colback = sub, 
    colframe = mainrq, 
    boxrule = 0pt, 
    toprule = 5pt,
    left=2pt,
    right=2pt,
    after skip=4pt,
    before skip=3pt,
    bottom=0pt,
    top=0pt
}
\definecolor{mainrq}{HTML}{3939c6}
\definecolor{main}{HTML}{A9A9A9}    
\definecolor{sub}{HTML}{f8f8ff}     
\definecolor{maroon}{cmyk}{0,0.87,0.68,0.32}

\crefname{equation}{Eq.}{Eqs.}

\def\delequal{\mathrel{\ensurestackMath{\stackon[1pt]{=}{\scriptscriptstyle \Delta}}}}

\newcolumntype{P}[1]{>{\centering\arraybackslash}p{#1}}

\makeatletter
\xpatchcmd{\proof}{\topsep6\p@\@plus6\p@\relax}{}{}{}
\makeatother
\newcommand{\xmark}{\ding{55}}%
\newcommand*\colourcheck[1]{%
  \expandafter\newcommand\csname #1check\endcsname{\textcolor{#1}{\ding{52}}}%
}
\colourcheck{blue}
\colourcheck{green}
\colourcheck{red}

\newtheorem{theorem}{Theorem}

\newtheorem{definition}{Definition}
\newtheorem{constraint}{Constraint}
\newtheorem{remark}{Remark}
\newtheorem{proposition}{Proposition}

\newcommand{\vcfl}{{\small \textsf{VCFL}}}
\newcommand{\wdg}{{\small \textsf{WDG}}}
\newcommand{\wco}{{\small \textsf{WCO}}}
\newcommand{\radg}{{\small \textsf{RaDG}}}
\newcommand{\madg}{{\small \textsf{MaDG}}}
\newcommand{\proposed}{{\small \textsf{CoCoGen+}}}

\newcommand{\mapping}{{\textit{M}}(u_n)}
\newcommand{\dgen}{d^{{\text{gen}}}_n} 
\newcommand{\dgenopt}{d^{{*,\text{gen}}}_n} 
\newcommand{\dgenother}{d^{{\text{gen}}}_{n'}} 

\newcommand{\dgenscript}{d'^{{\text{gen}}}_n} 

\newcommand{\dloc}{d^{{\text{loc}}}_n} 
\newcommand{\dlocother}{d^{{\text{loc}}}_{n'}} 

\newcommand{\esilontotaldgen} {\epsilon({\boldsymbol{d}^\text{\text{gen}}})} 

\newcommand{\esilontotaldgenscript} {\epsilon({\boldsymbol{d'}^\text{,\text{gen}}})} 

\newcommand{\esilondngen}{\epsilon_{n}(d^\text{\text{gen}}_{n}, \boldsymbol{d}^\text{\text{gen}}_{-n})} 

\newcommand{\esilondngenscript}{\epsilon_{n}(d'^{,\text{gen}}_{n}, \boldsymbol{d}^{\text{gen}}_{-n})} 

\newcommand{\esilondgenother}{\epsilon_{n'}(d^\text{\text{gen}}_{n'}, \boldsymbol{d}^\text{\text{gen}}_{-n'})} 

\newcommand{\dgenmin}{d^{{\text{gen}}}_{\text{min}}}
\newcommand{\dgenmax}{d^{{\text{gen}}}_{\text{max}}}
\newcommand{\potfunc}{F(\boldsymbol{d}^{\text{gen}})}
\newcommand{\dmix}{d_n^\text{\text{mix}}}

\newcommand{\expa}{\text{ exp}\left(\frac{\frac{1}{N} \sum_{n \in \mathcal{N}}\left[\alpha(\dloc + \dgen)^{-\beta} - \delta \right]-1}{\varrho} \right)} 

\def\BibTeX{{\rm B\kern-.05em{\sc i\kern-.025em b}\kern-.08em
    T\kern-.1667em\lower.7ex\hbox{E}\kern-.125emX}}
\begin{document}

\title{{Cooperate to Compete: Strategic Data Generation and Incentivization Framework for Coopetitive Cross-Silo Federated Learning}}


\author{Thanh Linh Nguyen,~\IEEEmembership{Graduate Student Member,~IEEE}, Nguyen~Van~Huynh,~\IEEEmembership{Member,~IEEE}, and \\Quoc-Viet Pham,~\IEEEmembership{Senior Member,~IEEE}
\thanks{Thanh Linh Nguyen and Quoc-Viet Pham (corresponding author) are with the School of Computer Science and Statistics, Trinity College Dublin, Dublin 2, D02 PN40, Ireland (e-mail: \{tnguyen3, viet.pham\}@tcd.ie).}
\thanks{Nguyen Van Huynh is with the School of Computer Science and Informatics, University of Liverpool, Liverpool, L69 3DR, UK (e-mail: huynh.nguyen@liverpool.ac.uk).}
\thanks{Part of this article was presented at IEEE GLOBECOM 2025 \cite{nguyen2025coopetitive}.}
}

\maketitle


\begin{abstract}
In data-sensitive domains such as healthcare and finance, cross-silo federated learning (CFL) allows organizations to collaboratively train AI models without sharing raw data. However, many practical CFL deployments are inherently coopetitive, in which organizations cooperate during model training while competing in downstream markets. In such settings, training contributions, including data volume, quality, and diversity, can improve the global model yet inadvertently strengthen rivals, eroding one's competitive advantage. {This dilemma is amplified by non-independent and non-identically distributed data, which leads to asymmetric learning gains and undermines sustained participation}. While existing competition-aware CFL and incentive-design approaches reward organizations based on marginal training contributions, they fail to account for the implicit costs of strengthening competitors. In this paper, we introduce {\proposed}, a coopetition-compatible data generation and incentivization framework that jointly models statistical data heterogeneity and inter-organizational competition while endogenizing GenAI-based synthetic data generation as a strategic decision in CFL. Specifically, {\proposed} formulates each training round as a weighted potential game, {where organizations strategically decide how much synthetic data to generate by balancing learning performance gains against computational costs and utility losses caused by competition}. We then provide a tractable equilibrium characterization and derive implementable generation strategies to maximize system-wide social welfare. To promote long-term collaboration, we integrate a payoff redistribution-based incentive mechanism to compensate organizations for their training contributions and competition-caused utility degradation. Experiments on increasingly challenging learning tasks using Fashion-MNIST, CIFAR-10, and CIFAR-100 validate the feasibility of {\proposed}. The results show how heterogeneity, competition intensity, and incentives shape organizational strategies and social welfare, while {\proposed} outperforms baseline methods in overall efficiency. 
\end{abstract}

\begin{IEEEkeywords}
Coopetition, Cooperation, Competition, Federated Learning, Game Theory, Generative AI.
\end{IEEEkeywords}

\newcommand\mycommfont[1]{\footnotesize\ttfamily\textcolor{blue}{#1}}


\section{Introduction}
\label{sec:intro}
Modern artificial intelligence (AI) services and products increasingly depend on access to large, diverse, and high-quality data~\cite{alabdulmohsin2022revisiting}. However, in highly regulated domains, such as healthcare and finance, relevant data is often scarce and siloed across multiple organizations (e.g., hospitals or banks) and protected by strict privacy regulations, such as the General Data Protection Regulation (GDPR)\cite{regulation2016regulation}. These constraints are more pronounced in the era of generative AI (GenAI)\cite{gnyawali2009co, openllm}, where training robust AI models requires massive and heterogeneous datasets, while centralized data collection is both costly and becoming harder to sustain due to legal and platform restrictions on large-scale data acquisition~\cite{oecd2025ipai}. In light of these constraints, cross-silo federated learning (CFL) has emerged as a promising paradigm, enabling organizations to collaboratively train a global model while keeping private data local, exchanging only model updates through a central server, preserving data privacy and regulatory compliance~\cite{hahn2025diffusion, kairouz2021advances}.

\begin{figure}[t!]
	\centering
	\includegraphics[width=0.95\linewidth]{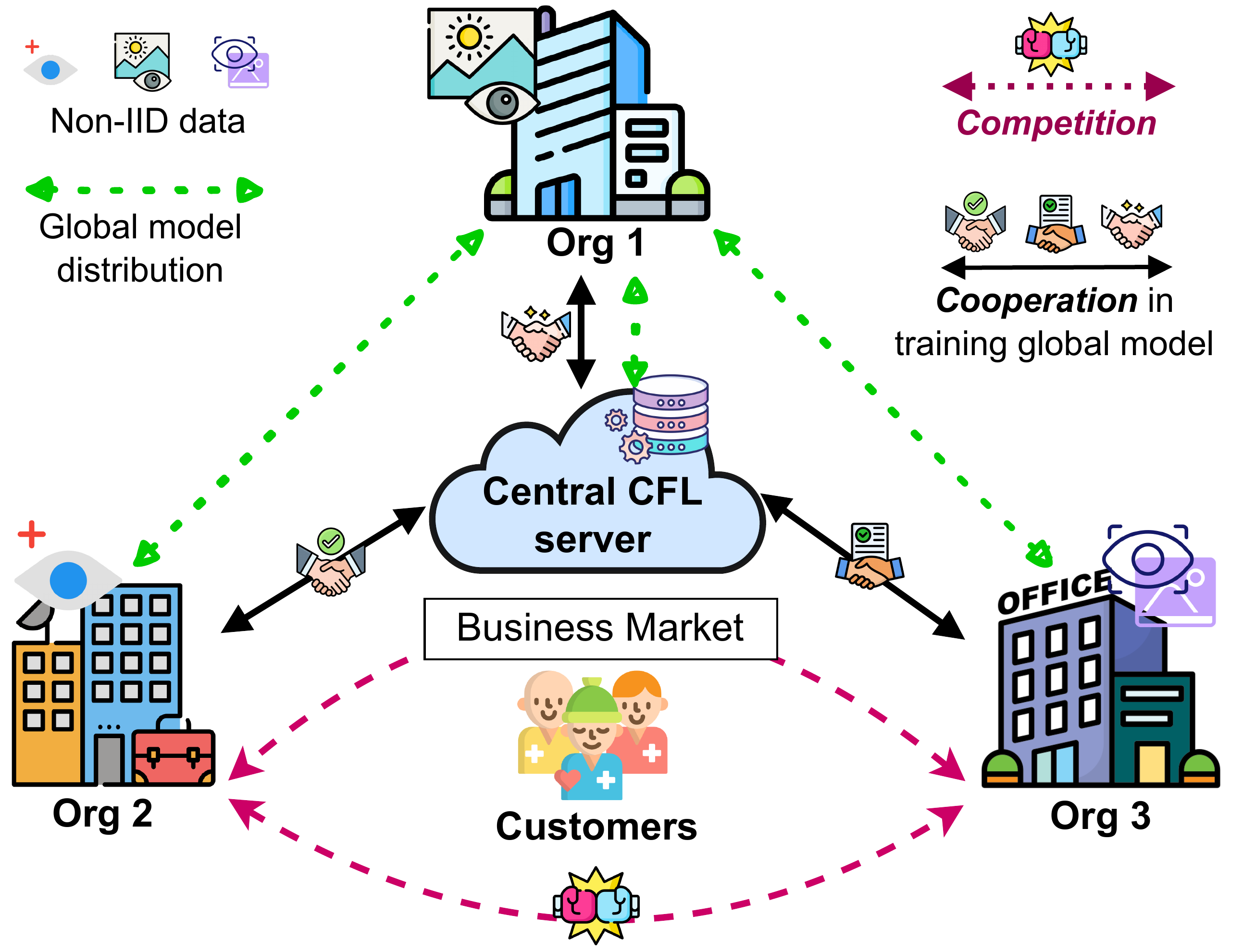}
	\caption{Coopetitive cross-silo federated learning setting with heterogeneous organizations. \textit{{Organizations cooperate by exchanging model updates to build a shared global model, while competing in downstream markets for market share and revenue. Each organization holds non-IID local data.}}}
    \label{fig:illustration_of_coopetition}
\end{figure} 

Despite the potential, CFL faces significant challenges in real deployments. {In particular, organizations are rational, benefit-driven, and often reluctant to fully contribute their proprietary data, computing, and communication resources, primarily due to the phenomenon of \textit{coopetition}, wherein organizations act as both \textit{collaborators} in model training (i.e., cooperation) and \textit{competitors} in leveraging the shared trained global model in downstream markets (i.e., competition)~\cite{huang2023duopoly, pmlreinav25a}}. This dual role fosters self-interested behavior, as organizations may fear exposing valuable information that may strengthen their competitors in the business market, alongside concerns about potential privacy leakage and the operational burden of iterative local training and communication.
{This challenge is compounded by the existence of non-IID data across organizations, which can degrade global model generalization and cause uneven benefits across participating organizations~\cite{kairouz2021advances}}. Under non-IID, some organizations may gain little from cooperation relative to training alone, further increasing dropout risks, thereby weakening the sustainability of the CFL system. For example, Fig.~\ref{fig:illustration_of_coopetition} illustrates a scenario involving three geographically distributed private and for-profit healthcare organizations, each possessing heterogeneous datasets (e.g., retinal images from different patient populations and age groups). These organizations seek to cooperatively develop an AI-based automated diabetic retinopathy detection system to address the limitations of their individual datasets, such as insufficient data volume and diversity. However, as they serve overlapping customer bases and try to offer the most accurate and reliable AI solutions, they are also direct competitors. Consequently, concerns over exposing proprietary data, losing competitive advantage, and incurring a significant operational overhead present significant barriers to data sharing and global model development. Therefore, from a system-wide perspective, {there is an urgent need to develop solutions that quantify how coopetition and non-IID affect sustainable and efficient cooperation, also known as maximizing \textit{social welfare}~\cite{tang2021incentive}, under a competitive nature among organizations in CFL}. Social welfare can be defined as the sum of the utilities of all participating organizations \cite{murhekar2023incentives}. {Technically, maximizing social welfare ensures that the collective utility of all organizations is optimized, making training cooperation both effective and sustainable for all organizations involved.}

{Recent studies have shown that GenAI-based data augmentation can mitigate non-IID by generating synthetic data to reduce inter-organizational distribution gaps~\cite{10398474}. In competitive CFL, where data sharing is restricted and distributional mismatch across organizations is pronounced, GenAI can provide a scalable method to close these data distribution gaps without direct data exchange. However, GenAI-based synthetic data generation is computationally intensive and costly, particularly when performed on the organization side~\cite{openllm,zhang2025gpt, anonymous2026bridging}}. This introduces a consequential strategic decision: "\textit{how much synthetic data to generate on the local organization}". {Organizations need to balance the utility gains from improved learning performance against the costs of cooperative data generation and local training, as well as the competitive losses of enabling competitors to benefit from a more accurate shared global model. Despite its significance, existing CFL literature and incentive mechanisms have not yet provided a unified framework that endogenizes strategic GenAI-based data generation. Moreover, the coupled effects of data heterogeneity and inter-organizational competition on utilities and system-wide welfare remain underexplored.}

Motivated by these challenges and discussions, we aim to address the following research questions (RQs) in this work:

\noindent \textbf{\#RQ1:} \textit{How does the interplay between non-IID and economic inter-competition influence organizational strategic behaviors, incentive mechanism design, and the overall social welfare in a CFL system?}

\noindent \textbf{\#RQ2:} \textit{Given coupled effects incurred by non-IID and inter-organizational competition, how can we develop a GenAI-based data generation strategy for each organization to maximize system-wide social welfare?}

{\noindent \textbf{\#RQ3:} \textit{How can we design a redistribution-based incentive mechanism that internalizes competitive externalities, promotes efficient data generation, and improves social welfare, while remaining budget-balanced under non-IID CFL?}}


To answer these RQs, we propose a {Co}opetition-{Co}mpatible Data {Gen}eration and Incentivization Framework for CFL, termed {\proposed}. {\proposed} jointly models inter-organizational competition and non-IID through gains based on learning performance (i.e., training loss) and payoff redistribution incentives. {It treats GenAI-based synthetic data generation as a strategic training decision, allowing organizations to optimize their strategies under competition and non-IID data conditions}. We show that the resulting per-round interaction can be formulated as a weighted potential game \cite{la2016potential}, which enables systemic equilibrium characterization of organizations’ synthetic data generation strategies. Building on this formulation, we derive implementable Nash equilibrium solutions via Karush–Kuhn–Tucker (KKT) conditions and a provably convergent fixed-point iteration. Moreover, we incorporate a payoff-redistribution-based incentive mechanism that not only compensates organizations according to their marginal contributions to global model performance, but also redistributes payoffs from lower-contribution organizations to those of higher ones, thereby promoting stable participation under practical constraints (e.g., individual rationality and budget balance). 

{The main contributions of this paper are as follows: 
\begin{itemize}   
\renewcommand{\labelitemi}{\textcolor{black}{$\blacktriangleright$}}
    \item {We develop a unified framework for CFL that jointly captures data heterogeneity and inter-organizational competition, which are two tightly coupled factors that determine participation incentives and system-wide social welfare.}

    \item {We model GenAI-based synthetic data generation as a strategic decision for mitigating non-IID effects, enabling organizations to determine the amount of generated data under competitive, cost, and heterogeneity constraints.}
    
    
    \item We design a payoff redistribution-based incentive mechanism to mitigate utility losses caused by competition. This addresses the dual challenge of only rewarding marginal training contributions and competitive externalities (i.e., strengthening market rivals), thereby improving fairness and supporting sustainable collaboration.
    
    
    \item We formulate the per-training-round strategic interaction as a weighted potential game. This enables equilibrium characterization under coupled organizational utilities.
    

    \item We provide empirical insights into the joint effects of coopetition and data heterogeneity across different learning task complexities. Particularly, our experiments show that stronger competition and milder data heterogeneity can stimulate synthetic data generation and improve social welfare. {They also reveal that the effect of payoff redistribution strength is learning task-dependent and non-monotonic, implying that effective incentive design must be co-tuned with task complexity and data heterogeneity.}
\end{itemize}

A key distinction between {\proposed} and our earlier framework in \cite{nguyen2025coopetitive} is the incorporation of a payoff-redistribution-based incentive mechanism, together with a systematic analysis of key factors shaping organizations’ synthetic-data generation strategies and social welfare across different learning task complexities. These additions offer significant benefits, including incentive fairness, enhanced robustness to free riders, enhanced overall social welfare under competition and data heterogeneity, and deeper insights into the feasibility of the framework under real-world learning tasks. Table~\ref{tab:comparison} summarizes the relevant literature across multiple contribution dimensions, which are further discussed in Section~\ref{sec:related_work}. }
\section{Related Work}
\label{sec:related_work}

\begin{table}[t!]
    \caption{Comparison between prior research \& \proposed.}
    \label{tab:comparison}
    \centering
    \begin{tabular}{|c|c|c|c|c|c|c|c|}
    \hline
    \rowcolor{gray!20}
\textbf{Work} & \textbf{F1} & \textbf{F2}&\textbf{F3} &\textbf{F4} & \textbf{F5}&\textbf{F6} &\textbf{F7} \\
\hline 
\hline 
        \cite{huang2023duopoly} & \xmark & \xmark &  \redcheck & \xmark & \redcheck & \xmark&  \xmark \\
    \cite{10530187} &\redcheck   & \xmark & \redcheck   & \xmark & \redcheck &\redcheck  &  \xmark\\
        \cite{wu2022mars, tan2024fedcompetitors, chen2024free} &\xmark  &  \xmark& \redcheck & \xmark &  \redcheck & \xmark & \xmark \\
    \cite{tang2021incentive, tang2024blockchain, mao2024game} & \redcheck & \xmark & \xmark & \xmark & \redcheck   & \redcheck  & \xmark \\
        \cite{10740334} & \redcheck & \xmark & \xmark & \xmark & \redcheck & \xmark & \xmark \\
    \cite{li2024filling} & \xmark &  \redcheck & \xmark &  \xmark & \xmark & \xmark & \xmark \\
    
         \cite{10680417} & \redcheck &  \redcheck & \xmark &  \xmark & \xmark & \xmark & \xmark \\
         
    \cite{huang2024imfl}  & \redcheck & \redcheck &  \xmark& \xmark & \xmark & \redcheck & \xmark \\ 
    
        \cite{yuan2023tradefl} & \xmark & \xmark &  \redcheck& \xmark & \redcheck & \redcheck & \redcheck \\ 

        \cite{murhekar2025you, murhekar2023incentives} & \redcheck & \xmark & \xmark &  \xmark & \xmark & \redcheck & \redcheck \\ 
        \hline

        \hline 
    \proposed & \redcheck & \redcheck & \redcheck & \redcheck & \redcheck & \redcheck & \redcheck \\\hline

    \end{tabular}

\begin{minipage}{0.95\linewidth} ~\\
\footnotesize	
     \textit{\textbf{F1}: Coopetition-aware incentive; \textbf{F2}: Strategic GenAI-based data generation; \textbf{F3}: Coopetition; \textbf{F4}: Joint non-IID and coopetition modeling; \textbf{F5}: Cross-silo settings; {\textbf{F6}}:  Social welfare}; \textit{\textbf{F7}: Payoff redistribution protocol}.
\end{minipage}
\end{table}

\subsection{Competition-aware Cross-silo Federated Learning}

\noindent Understanding inter-organizational competition in CFL has recently emerged as a critical research direction. {This competitive dynamic significantly influences organizations’ incentives to participate, the resources they contribute, and the value they can exploit from a shared global model, as well as from other organizations.} 

One line of research focuses on market-outcome modeling, where learning performance is linked to downstream economic outcomes (e.g., revenue, customer retention, or market share). For instance, Huang et al. \cite{huang2023duopoly} studied a duopoly setting and showed that competition, data contribution, and privacy costs jointly affect global model performance and organizational profits. Specifically, by formulating interactions among organizations and customers as a three-stage Stackelberg game, they showed that high privacy costs coupled with strong competition reduce profits and deteriorate model performance. Extending this direction, Huang et al. \cite{10530187} analyzed an oligopoly competition model and proposed a model-differentiation mechanism that allows organizations to further personalize the shared global model for downstream services. Their mechanism can improve learning model accuracy, organizational revenues, and social welfare. {Similarly, Wu et al. \cite{wu2022mars} developed a decision support framework among market competitors that models market-share dynamics based on customer loyalty, switching behaviors, and potential market growth. By introducing the concepts of market stability and friendliness, they established tight lower bounds on necessary performance gains, ensuring that model improvements are distributed such that no organization's market share decreases beyond a negotiated threshold.}


A second line of work addresses competition-aware collaboration formation. For example, Tan et al. \cite{tan2024fedcompetitors} leveraged graph-theoretic constructs and balance theory to quantify competitive relationships and proposed a collaborator-selection algorithm that avoids conflicts of interest while improving social welfare. 
{Chen et al. \cite{chen2024free} further proposed an optimal collaboration formation strategy, specifically designed to address the challenges of self-interest and business competition among organizations. The framework introduces two core principles of ensuring an organization only benefits from the system if they contribute to it, thereby eliminating free-rider concerns, where organizations will benefit from others' contributions without truthfully contributing to the training process, coalition formation, and preventing any contribution to competitors or their supporters to avoid conflicts of interest.}

\textit{While these studies and others in the literature establish that competition affects the CFL dynamics (e.g., social welfare, revenues, or choosing collaborators), they do not explicitly or theoretically quantify how non-IID drives learning-performance disparities that feed back into economic incentives, nor analyze the combined effects of non-IID and competition on organizational decision-making and social welfare~\cite{huang2023duopoly, 10530187, wu2022mars, tan2024fedcompetitors, chen2024free}. Furthermore, these works typically assume voluntary participation, overlooking the practical necessity of compensating organizations for their contributions, such as data, computation, and communication resources. This limitation is particularly critical in CFL, where all organizations receive the same final global model, while simultaneously competing in downstream tasks. {In such settings, the lack of contribution-aligned incentive designs may discourage organizations from allocating sufficient resources to the training process due to free-rider concerns, especially under competitive pressure~\cite{huang2023duopoly, 10530187, wu2022mars}.}}

\subsection{Incentive mechanisms in Cross-silo Federated Learning}
\noindent Game-theoretic incentive design has become a prevalent and effective tool for coordinating and sustaining CFL among rational and self-interested organizations. {Existing studies mainly focus on mitigating the free-rider problem and aligning private utilities with system-wide objectives (e.g., fairness, social welfare) under practical constraints such as individual rationality and budget balance.}


For example, Tang et al. \cite{tang2021incentive} proposed a non-cooperative game formulation to derive the organization's optimal processing capacity decision during training, explicitly targeting free-riding and public-good aspects of CFL, while maximizing the social welfare. They further integrated the incentive mechanism into a lightweight and transaction-efficient blockchain-based system to ensure transparency and auditability, eliminating dependence on a central party for incentive coordination in \cite{tang2024blockchain}. From a privacy perspective, Mao et al. \cite{mao2024game} developed a multi-stage game-theoretic incentive mechanism to analyze the behavior of privacy-aware organizations and their impact on CFL model performance, particularly in terms of convergence rate. The mechanism incentivizes organizations to optimally select noise levels for local model updates while enabling the central server to allocate corresponding rewards. It further ensures truthful reporting and aims to achieve social welfare maximization. Liu et al. \cite{10740334} explored a different methodology for incentive mechanism designs by utilizing a supermodular game for characterizing strategic interactions and deriving data size strategies among organizations. The use of strategic complementarities captures the collaborative yet self-interested nature of CFL, where an organization’s incentive to contribute increases with the contributions of others. This structure ensures the existence and convergence of a pure-strategy Nash equilibrium, supporting a stable and sustained collaboration framework. The equilibrium outcomes are used to design a fairness-aware aggregation weighting mechanism, which addresses performance bias arising from data heterogeneity.

\textit{However, existing incentive mechanisms largely focus on cooperation-centric CFL settings and do not explicitly model business competition among organizations. As a result, they do not capture the competitive externality that an organization may suffer when its contributions improve competitors’ downstream services, an effect that can materially change resource-allocation decisions (e.g., the amount of GenAI-generated synthetic data), participation stability, global model performance, and social welfare even when standard incentive constraints are satisfied.}

\subsection{GenAI-based Federated Learning}
\noindent{Non-IID data remains a central challenge in CFL, often causing optimization instability and deviation from the global optimum \cite{kairouz2021advances}. It also provides the key link between the two previous strands of work. Particularly, in competitive CFL, organizations need incentives to invest in GenAI-based synthetic data generation as a costly yet effective means of reconstructing underrepresented local distributions, reducing distribution gaps, and improving generalization.

For instance, Li et al. \cite{li2024filling} proposed FIMI, a resource-aware data generation framework that determines synthetic data amounts jointly with bandwidth and power allocation to meet performance targets while minimizing energy consumption to fill the missing categories in local datasets for cross-device FL. Zhang et al. \cite{10680417} considered offloading data generation from energy-constrained devices to generative AI service providers. The system models three entities, including the buyer, devices, and providers, who act as sellers. To incentivize participation and ensure efficient resource allocation, a reverse auction–based mechanism is employed to coordinate transactions among entities. This integration of GenAI and economic incentives leads to improved convergence speed and higher model accuracy. From a contract-theoretic viewpoint, Huang et al. \cite{huang2024imfl} proposed a data quality–aware incentive mechanism that jointly considers data volume, data quality (including local and GenAI-generated data), and unit computation cost. The framework derives optimal contracts to minimize the cloud server’s cost under information asymmetry settings, where only the distributions of client data sizes and computation costs are known. {This proposed solution achieves higher accuracy, better social welfare, and lower server costs.}

\textit{However, these GenAI-based approaches are primarily tailored to cross-device settings with resource-constrained clients and lack applicability to cross-silo FL, where organizations are strategic competitors, face different cost structures (e.g., for pricing or generating data), and must strategically decide whether and how much data to generate, while anticipating the spillover, also known as competitive externality \cite{scitovsky1954two}, to competitors in business markets. Consequently, GenAI-based synthetic data generation is not only a technical consideration but also a strategic and economic one, which remains insufficiently addressed in existing work.}

\section{{System model} and problem formulation}
\label{sec:system_model}
\subsection{Overview}

{Leveraging the concepts of coopetition \cite{gnyawali2009co} and external economies \cite{scitovsky1954two} from economics, we define CFL coopetition.}

\begin{definition} [CFL Coopetition]
\label{def:coopetition}
\underline{\textbf{CFL coopetition}} is a strategic regime in which organizations simultaneously \textit{\textbf{cooperate}} to train a shared global model and \textit{\textbf{compete}} in downstream markets for economic returns. Its defining feature is that an organization’s contribution generates positive collective learning gains but also produces negative competitive externalities\footnote{When an organization has a sheer volume of local/synthetic data, it disproportionately shifts inter-organizational performance gaps and 
contribution weights in the global model. Under competition, rivals may benefit from such contributions without compensating the contributing organization.} by unintentionally improving rival organizations’ market position.
\end{definition}

As illustrated in Fig.~\ref{fig:system_model}, we consider a practical CFL architecture with multiple coopetitive organizations and a trustworthy central server, which can be either rented or jointly established for coordination. These organizations, whose utilities depend on the model performance improvements across all participants \cite{wu2022mars}, form a consortium for model training.

{We denote by $\mathcal{N} \delequal \{1,\dots,n,\dots,N\}$ the set of $N$ organizations, where each organization $n \in \mathcal{N}$ has a processing capability $f_n$, and a dataset consisting of original local data and/or GenAI-generated synthetic data}. Organizations collaboratively participate in training a global model over a set of training rounds $\mathcal{T} \delequal \{1,..., t,..., T\}$ to improve performance on downstream tasks used for their services or products. Let $\mathcal{D}^{\text{loc}}_n$ and $\mathcal{D}^{\text{gen}}_n$ denote the original local data and synthetic data of organization $n$, respectively, with corresponding sizes $\dloc \delequal |\mathcal{D}^{{\text{loc}}}_n|$ and $\dgen \delequal |\mathcal{D}^{{\text{gen}}}_n|$. {In traditional CFL settings, organization $n$ trains its local model solely on dataset $\mathcal{D}^{{\text{loc}}}_n$. In contrast, in our work, each organization~$n$ adopts a mixed local dataset of $\mathcal{D}^{{\text{mix}}}_n = \mathcal{D}^{{\text{loc}}}_n \cup \mathcal{D}^{{\text{gen}}}_n$ and $d_n^\text{\text{mix}} \delequal |\mathcal{D}^{{\text{mix}}}_n| = d_n^\text{\text{loc}} + d_n^{\text{gen}}$ for its local model update, where we apply the equal-generated data allocation strategy to each class}. Let $\boldsymbol{d}^{\text{gen}} \delequal \{\dgen: \forall{n} \in \mathcal{N}\}$ denote the GenAI-augmented data size strategy profile of all organizations in each round $t$, and let $\boldsymbol{d}^{\text{gen}}_{-n}$ be the data size strategy profile of all organizations excluding organization $n$ (i.e., $\boldsymbol{d}^{\text{gen}}_{-n} \delequal \{{d}^{\text{gen}}_{n'}:{\forall n', n \in \mathcal{N}, n'\neq n\})}$.


\begin{figure}[t!]
	\centering
	\includegraphics[width=\linewidth]{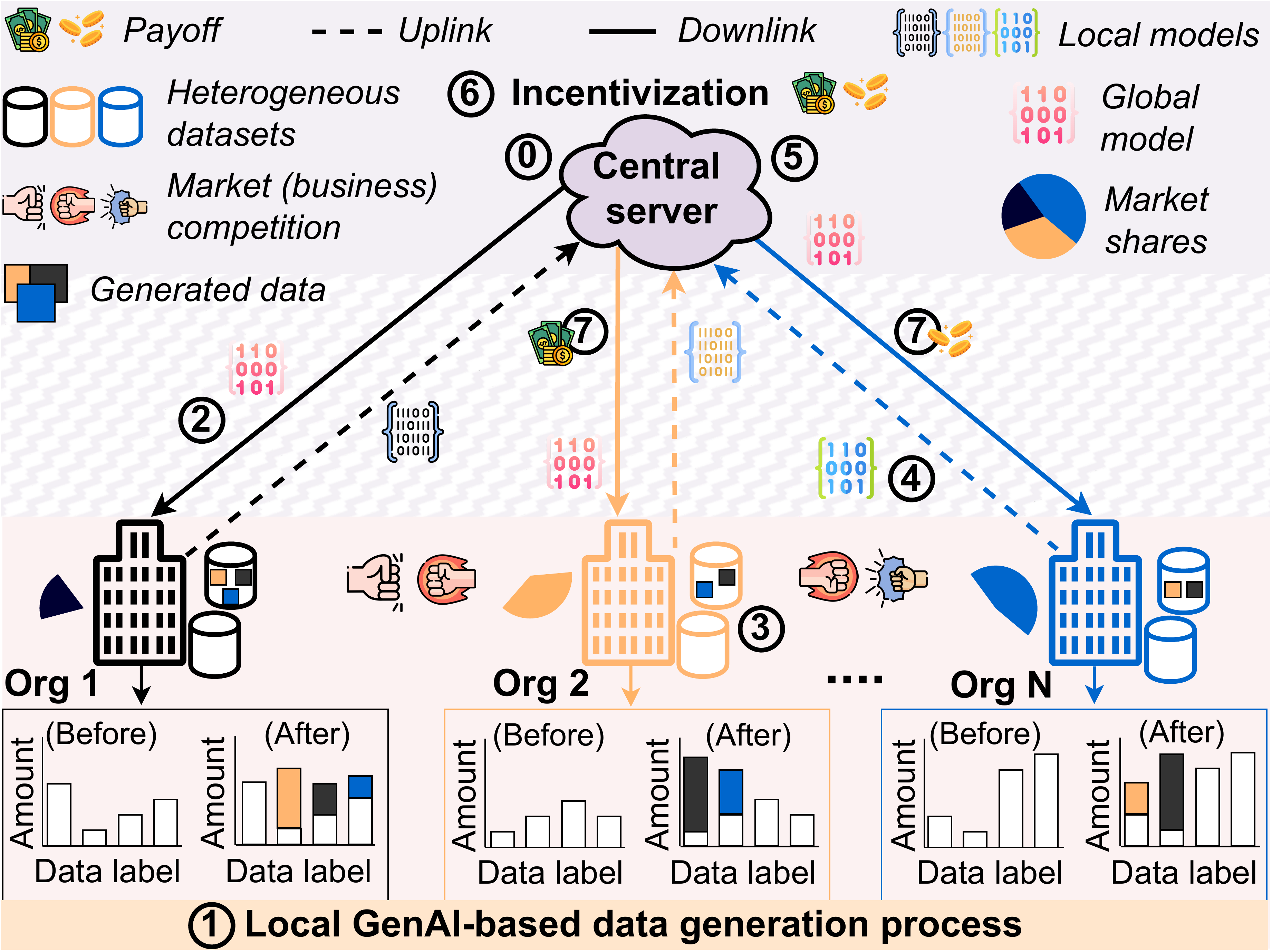}
	\caption{{\proposed} architecture with workflow.}
    \label{fig:system_model}
\end{figure} 

\subsection{GenAI-based Cross-silo Federated Learning Workflow}
CFL organizations collaboratively train the global model under the coordination of a central server using their private data.
Each organization owns a local model architecture with the same dimension as the global model architecture \cite{mcmahan2017communication}. \textit{The per-round CFL training process} is as follows (see Fig.~\ref{fig:system_model}). 
\begin{enumerate}
\setcounter{enumi}{-1}
    \item \textbf{Global model initialization \& Synthetic data generation strategy profile allocation}. 
    {We denote $\boldsymbol{w}^{t}$ as a set of global model parameters in the training round $t$}. Necessary information about the training process, such as the number of global model parameters and the learning network architecture, is public to all organizations within the system. Besides, based on gathered information (e.g., high-level information about data distribution, computational capabilities), which is approved by organizations for learning tasks, the central server can compute data generation strategies (as elaborated in Sec.~\ref{sec:proposed_mechanism}) and send this guidance to organizations.

    \item \textbf{Local GenAI-based data generation.} After receiving guidance from the central server, each organization $n$ puts efforts into augmenting its local dataset by generating $\dgen$ (e.g., using a variational transposed convolutional neural network \cite{anonymous2026bridging}), considering their local data, utilities, and competition intensity with other organizations.

    \item {\textbf{Global model downloading}. In round $t$, each organization $n$ downloads the global model parameter set $\boldsymbol{w}^{t-1}$.}

    \item  \textbf{Local model updating}. Then, each organization $n$ selects local data size ${d}^{\text{mix}}_n$ including $\dloc$ and $\dgen$, and utilizes local compute resources $f_n$ to train $\boldsymbol{w}^{t-1}$ over one local update, performing stochastic gradient descent to obtain a local model update $w_n^t$. The local empirical loss of organization $n$ in the training round $t$ is defined by:
\begin{align}
\label{eq:local_loss}
L^{t}_n(\boldsymbol{w}^{t-1}) = \frac{1}{d_{n}^{{\text{mix}}}} \sum\nolimits_{i=1}^{d_{n}^{{\text{mix}}}} l(\boldsymbol{w}^{t-1}; x_i),
\end{align}
where $x_i$ is a data sample in the mixed dataset $\mathcal{D}^{{\text{mix}}}_n$ and $l(\boldsymbol{w}^{t-1}; x_i)$ is the classification loss function (e.g., cross-entropy) of training $\boldsymbol{w}^{t-1}$ on $x_i$. Based on \cite{ hestness2017deep, wang2020machine, li2025misfitting, li2024filling}, we use the power-law function to characterize  $n$'s local model performance (i.e., training loss) on the training data combination of private data and synthetic data, as expressed as follows: 
\begin{align}
\label{eq:local_loss_error}
\epsilon_{n} \approx \alpha(d^{{\text{loc}}}_{n} + d^{{\text{gen}}}_{n})^{-\beta} - \delta,
\end{align}
{where $\alpha > 0$, $\beta > 0$, and  $\delta \ge 0$ are experimentally determined hyperparameters\footnote{We can fit the performance curves using two ways such as extrapolation or approximation depending on the learning tasks' complexity \cite{wang2020machine}.} (conducted in Sec.~\ref{sec:performance_evaluation}), with $\alpha$ representing a task-dependent scaling constant, $\beta$ controlling the steepness of the learning curve, and $\delta$ is a task-dependent offset}. This function shows that local model performance, thereby global model performance, depends on both local and synthetic data, underscoring the importance of optimizing the volume of synthetic data to achieve higher learning performance.

    \item \textbf{Local model uploading}. The organization $n$ uploads its model update ${w}_n^{t}$ to the server. 
    
    \item \textbf{Local model aggregation}. Finally, the server aggregates local model updates using an aggregation algorithm (e.g., FedAvg \cite{mcmahan2017communication}) to 
get a new global model $\boldsymbol{w}^{t}$. The global loss function is computed by:
\begin{align}
\label{eq:global_loss}
L^t(\boldsymbol{w}^{t}) = \sum_{n \in \mathcal{N}} \frac{d_{n}^{{\text{mix}}}}{\sum^{N}_{i=1} d_{i}^{{\text{mix}}}} L_{n}^{t}(\boldsymbol{w}^{t}_n).
\end{align}

    \item {\textbf{Payoff computation}. The central server computes the payoff redistribution (formally defined in Def.~\ref{def:pay_off_redistribution}) based on the uploaded local model performance and the intensity of competition among organizations. It also determines the compensation provided to organizations for their computational costs and data (e.g., through access to the trained global model).}

    \item {\textbf{Reward distribution}. The central server distributes rewards to compensate organizations in proportion to their contributions to global model improvements through local data and computational resources. It also enforces the payoff transfers by collecting payments from lower-contributing organizations and allocating them to higher-contributing organizations.}
\end{enumerate}

\subsection{{Computational Cost}}
Let $\eta_n$ and $\mu_n$ be the computation workload required for the organization $n$ to train and generate a single data sample, respectively.
The energy consumption in one training round of organization $n$ can be profiled by \cite{yang2020energy}:
\begin{align}
\label{eq:computation_overhead}
E_{n}^{\text{cmp}} = \kappa_n (\eta_n d_{n}^{{\text{mix}}} + \mu_n d_{n}^{{\text{gen}}}) f_n^2,
\end{align}
where $\kappa_n$ is the effective capacitance coefficient of organization $n$'s computational chipset. The total cost for computation tasks of $n$ in each training round can be represented by:
\begin{align}
\label{eq:cost}
C_{n} = C_n^{\text{cmp}} E_{n}^{\text{cmp}},
\end{align}
where $C_n^{\text{cmp}}$ represents the computational operating cost per energy unit. Besides, each organization pays the central server the same fee $C_{0}$ for calculating the data generation strategy profile and aggregating the global model. {The size of model updates for each organization is fixed \cite{mcmahan2017communication}, and thereby the communication cost is neglected in this work \cite{sun2023understanding, yingexact}.}


\subsection{Organization Utility given Coopetition and Heterogeneity}
{This subsection formalizes how each organization’s utility is shaped by the interplay of cooperative learning gains, competitive losses, computational costs, and payoff redistribution. We decompose the utility into four components and define each in turn.}

\paragraph{{Cooperation gain}} Organization $n$ contributes a mixed dataset of size ${d}^{\text{mix}}_{n}$ to achieve a high-accuracy global model (i.e., minimizing training loss). Following \cite{tang2021incentive}, we quantify the contribution of each organization as a function of the trained global model performance (i.e., training model error). We denote $\epsilon$ as the global model performance after one training round. The smaller the trained global performance, the better the fit of the global model to the training data. According to \cite{pandey2020crowdsourcing, yang2020energy}, {we establish the relationship between the global model performance $\epsilon$ under all organizations' data contribution strategy profile ${\boldsymbol{d}^{\text{gen}}}$ and the local model performance $\epsilon_n$ (see Eq.~\eqref{eq:local_loss_error}), which is profiled as follows:}
\begin{align}
\label{eq:data_size_loss}
\esilontotaldgen = \text{exp}\left(\frac{\frac{1}{N}\sum\nolimits_{n \in \mathcal{N}}{\epsilon_{n}} - 1}{\varrho}\right),
\end{align}
where $\varrho > 0$ is a fine-tuning parameter based on learning tasks.
The revenue of organization $n$, obtained from its contribution $\dgen$ to improve the global performance, is given by: 
\begin{align}
\label{eq:gain}
r_{n}(d^{\text{gen}}_{n}, \boldsymbol{d}^{\text{gen}}_{-n}) = \psi_{n} \left[\epsilon_0 - \esilontotaldgen\right],
\end{align} 
where $\psi_{n}  \ge 0$ is the organization $n$'s revenue per global model's performance unit or its valuation on global model precision \cite{tang2021incentive, huang2023duopoly}, and {$\epsilon_0$ denotes the performance of the initial, untrained global model, evaluated on the test set}. Intuitively, a higher $\psi_n$ means higher profitability that the organization $n$ gains from the global model-related services they can create. 

{\paragraph{Competition loss} While cooperation improves the global model, it also creates a negative competitive externality, where rival organizations benefit from each other’s contributions without bearing the associated costs. To quantify this externality, we introduce the notion of competitive intensity.}
\begin{definition} [Competitive Intensity]
\label{def:competition_intensity}
CFL competitive intensity $\gamma_{n, {n'}} \in [0,1]$, for $n \neq n'$, quantifies the degree of market rivalry between organizations $n$ and $n'$. It captures the degree of strategic overlap in their global-model-derived products and the congruence of their target customer segments.
\end{definition}
Using this metric, the revenue that a competing organization $n'$, extracts from organization $n$'s contribution is calculated by:
\begin{align}
\label{eq:coopetition_loss_1}
r_{n'} = \phi_{n'} \gamma_{n, n'} \left[\esilondngen - \esilondgenother\right],
\end{align}
where $\phi_{n'} \ge 0$ is the organization $n'$'s revenue per unit of contribution gap. The total competition loss that organization $n$ incurs aggregated over all rival organizations is given  by:
\begin{align}
\label{eq:coopetition_loss_2}
R_{n}(d^{\text{gen}}_{n}, \boldsymbol{d}^{\text{gen}}_{-n}) = \sum\nolimits_{n' \in \mathcal{N}} r_{n'}.
\end{align}
This loss captures the key coopetitive tension. Particularly, contributing more efforts improves the global model but simultaneously strengthens competitors’ model performance or downstream services, reducing the organization $n$’s market positions and advantage.

\paragraph{Payoff redistribution} To sustain cooperation among competing organizations and guarantee fairness, {\proposed} compensates higher-contributing organizations (e.g., higher volumes of synthetic data) at the expense of lower-contributing ones~\cite{yuan2023tradefl, falkinger2000simple}. The shared global model functions as a public good \cite{tang2021incentive}, without redistribution, free-riding incentives can erode participation. We therefore introduce a bilateral payoff transfer mechanism.
\begin{definition} [Payoff Redistribution]
\label{def:pay_off_redistribution}
Organization $n$ receives a payoff transfer from each competing organization $n'$, proportional to its marginal contribution to global model performance improvement and scaled by the competitive intensity between them:
\begin{align}
\label{eq:pay_off_redistribution_1}
p_{n,n'} =  \xi \gamma_{n, {n'}}\left[\esilondngen - \esilondgenother\right],
\end{align}
where $\xi \ge 0$ is the payoff compensation rate for each contribution gap unit. Therefore, the total payoff of $n$ gained from its competitors $n'$, can be calculated by:
\end{definition}
\begin{align}
\label{eq:pay_off_redistribution_2}
P_{n}(d^{\text{gen}}_{n}, {\boldsymbol{d}^{\text{gen}}_{-n}}) = \sum_{n' \in \mathcal{N}} {p_{n, n'}}.
\end{align}

\paragraph{Discussion: The redistribution constraints $\xi \leq \phi_{n'}$} The stability of the redistribution mechanism requires that the compensation rate $\xi$ not exceed the per-unit revenue $\phi_{n'}$ that any organization derives from its rival’s contribution.

\begin{remark}[$\xi \leq \phi_{n'}$]
\label{rem:1}
We impose $\xi \leq \phi_{n'}$\footnote{{We use $\xi$ and $\phi_{n'}$ to represent $P_{n}(d^{\text{gen}}_{n}, \boldsymbol{d}^{\text{gen}}_{-n})$ and $R_{n}(d^{\text{gen}}_{n}, \boldsymbol{d}^{\text{gen}}_{-n})$, respectively, for the ease of presentation of the assumption because two equations are affected by these two parameters.}} to ensure that no organization $n'$ is required to compensate a rival $n$ at a rate exceeding the benefit it derives from that rival’s efforts in {\proposed}. This constraint preserves the economic rationality of the cooperation and is critical for participation stability under competition. It also accommodates regulated domains (e.g., healthcare, finance) where cooperation may be mandated despite potential economic losses.


\textbf{Example 1.} In the mergers and acquisitions sector, in Facebook’s acquisition of WhatsApp, the \$3B restricted stock unit allocation ($\xi$) was structured to vest over four years, capping payouts in proportion to the projected synergy revenue of $\phi_{n'}$, expected from WhatsApp’s integration. This ensures that the compensation for continued cooperation did not exceed the marginal utility Facebook anticipated gaining, aligning with our CFL assumption $\xi \leq \phi_{n'}$ \cite{campbell2014facebook}.

\textbf{Example 2.} In 5G standard-essential patent licensing. Competing firms such as Ericsson, Nokia, and Qualcomm license their patented innovations, which are essential to the standard's functionality, to all manufacturers of end-user devices. Within this framework, the aggregate economic benefit generated by the universally adopted standard, monetized through licensing fees collected from all manufacturers, represents $\phi_{n'}$. This collected revenue is then distributed to patent-holding members via royalty payments ($\xi$). The financial viability and sustainability of the entire framework are predicated on the core assumption that $\xi \leq \phi_{n'}$, as the distributed royalties are by definition a fraction of the collected licensing fees \cite{stasik2020royalty}. 
\end{remark}

\begin{remark}[$\xi > \phi_{n'}$] When the compensation rate ($\xi$)  exceeds the benefit ($\phi_{n'}$), an organization must pay its competitor more than it gains, violating {\proposed}’s economic rationale. The dominant response in such a scenario is exit, blocking, or renegotiation. 


\textbf{Example 3.} In 2014, the Spanish government passed a new copyright law, mandating non-waivable payments ($\xi$) from news aggregators (e.g., Google News) to publishers for posting links or excerpts of news articles. However, Google News service's local benefit ($\phi_{n'}$) could not cover the compulsory fees, leading to their shutdown in Spain\cite{linktaxspain}.
\label{rem:contrast}
\end{remark}


\subsection{Problem Formulation}
%
{Having defined four components of organizational utility, including cooperation gain $r_{n}(d^{\text{gen}}_{n}, \boldsymbol{d}^{\text{gen}}_{-n})$, computational cost $C_{n} + C_0$, competition loss $R_{n}(d^{\text{gen}}_{n}, \boldsymbol{d}^{\text{gen}}_{-n})$, and payoff redistribution $P_{n}(d^{\text{gen}}_{n}, {\boldsymbol{d}^{\text{gen}}_{-n}})$, we now combine them into a per-round utility function. The utility of organization $n$ when choosing a synthetic data generation strategy $d^{\text{gen}}_{n}$, is as follows:}
\begin{align}
\label{eq:client_utility_one_round}
{U}_{n}(d^{\text{gen}}_{n}, \boldsymbol{d}^{\text{gen}}_{-n}) = &  r_{n}(d^{\text{gen}}_{n}, \boldsymbol{d}^{\text{gen}}_{-n}) + P_{n}(d^{\text{gen}}_{n}, {\boldsymbol{d}^{\text{gen}}_{-n}})  \nonumber - R_n (d^{\text{gen}}_{n}, \boldsymbol{d}^{\text{gen}}_{-n})  \\ &  -  C_{n} - C_{0}.
\end{align}
{\textit{Social welfare} is defined as the sum of organizational utilities, specifically, 
\begin{align}
\label{eq:social_welfare}
\sum_{n \in \mathcal{N}} {U}_n(d^{\text{gen}}_{n}, \boldsymbol{d}^{\text{gen}}_{-n}).
\end{align}
{\proposed} seeks a synthetic data generation strategy profile that maximizes social welfare while ensuring that individual participation remains rational and the redistribution mechanism is self-financing. These requirements are formalized in the following two properties.}

\begin{constraint}[Individual Rationality-IR]
\label{def:ir}
Each organization $n$ participates in the CFL consortium only if its utility is non-negative, specifically, 
\begin{align}
\label{eq:ir}
{U}_{n}(d^{\text{gen}}_{n}, \boldsymbol{d}^{\text{gen}}_{-n}) \ge 0, \forall{n \in \mathcal{N}}.
\end{align}
\end{constraint}

\begin{constraint}[Budget Balance]
\label{def:bb}
The payoff redistribution mechanism is self-financing, requiring no external subsidy \cite{falkinger2000simple}. Formally, aggregate payoff transfers sum to zero, i.e., 
\begin{align}
\label{eq:bb}
\sum_{n \in \mathcal{N}}\nolimits P_n(d^{\text{gen}}_{n}, \boldsymbol{d}^{\text{gen}}_{-n}) = 0.
\end{align}
\end{constraint} 
\section{A Potential Game-based proposed mechanism}
\label{sec:proposed_mechanism}
In this section, we demonstrate that the interaction between coopetitive organizations is a weighted potential game. We then use it to derive GenAI-based data generation strategies.

\subsection{Game Formulation}
\textbf{\textit{Game $\mathcal{G}$:}} (Stage Game in the training round $t$)
\begin{itemize}[leftmargin=.3in]
\item \textit{Players:} competitive, heterogeneous, self-interested, rational, and strategic organizations $n \in \mathcal{N}$.
\item \textit{Strategies:} each organization $n$ decides how much data $d^{{\text{gen}}}_{n}$ it should generate in the training round $t$.
\item \textit{Objectives:} each organization $n$ aims to maximize its utility $U_n$ expressed in \eqref{eq:client_utility_one_round}.
\end{itemize}

\begin{definition}[Nash Equilibrium of $\mathcal{G}$]
\label{def:nash1}
A strategy profile $\dgenopt$ is a pure strategy Nash Equilibrium (NE) point of the $\mathcal{G}$ if and only if no organization can improve its utility by deviating unilaterally \cite{nash1951non}, i.e., $U_n({\dgenopt}, {\boldsymbol{d}^{*,\text{gen}}_{\boldsymbol{-n}}}) \ge U_n({d^{{\text{gen}}}_{n}}, {\boldsymbol{d}^{*,\text{gen}}_{\boldsymbol{-n}}}).$
\end{definition}

\begin{algorithm}[t]
\small
\caption{Data generation using fixed-point iteration}
\label{alg:fpi_method}
\begin{algorithmic}[1] 
\Require $\dloc$, $\alpha$, $\beta$, $\delta$, $\varrho$, $\kappa_n$, $C^{\text{cmp}}_n$, $f_n$, $\eta_n$, $\mu_n$,  $z_n$,  $\dgenmin$, $\dgenmax$, tolerance $\epsilon_{\text{tol}}$, maximum iterations $K_{\text{max}}$, A1, A2, A3

\Ensure Data generation strategies $\boldsymbol{d}^{*,\text{gen}} = \{\dgenopt: \forall n \in \mathcal{N}\}$

\State Initialize $k = 0$, $\boldsymbol{d}^{\text{gen}}$, Calculate $F^{0}(\boldsymbol{d}^{\text{gen}})$, and Set $F^{-1}(\boldsymbol{d}^{\text{gen}}) = F^{0}(\boldsymbol{d}^{\text{gen}})$

\noindent $\textcolor{gray}{\rhd{\textit{ Fixed-point iteration:}}}$

\While{$|F^{k}(\boldsymbol{d}^{\text{gen}}) - F^{k-1}(\boldsymbol{d}^{\text{gen}})| > \epsilon_{\text{tol}}$ and $k \leq K_{\text{max}}$} 
\State $k \gets k + 1$
\For {each $n = 1$ to $N$}

$\textcolor{gray}{\rhd{\textit{ Case 1 (Theorem 2):}}}$
\If {$A_3 \frac{\alpha \beta}{N \varrho} \text{exp}\left(\frac{A_1 - 1}{\varrho}\right) < -A_2$}

\State $\dgen \gets \dgenmin$

$\textcolor{gray}{\rhd{\textit{ Case 2 (Theorem 2):}}}$
\ElsIf{$A_3 \frac{\alpha \beta}{N \varrho} \text{exp}\left(\frac{A_1 - 1}{\varrho}\right) > -A_2$}
\State $\dgen \gets \dgenmax$

$\textcolor{gray}{\rhd{\textit{ Case 3 (Theorem 2):}}}$
\Else
\State $\dgen + \dloc \gets \biggl[-\frac{A_2 N \varrho}{\alpha \beta} \text{exp}\left(-\frac{A_1 - 1}{\varrho} \right)\biggr] ^ {-\frac{1}{\beta+1}}$ \label{alg:line:update}
\label{alg:line:clip}
\EndIf
\EndFor
\State Update $\boldsymbol{d}^{\text{gen}}$ and Calculate $F^{k}(\boldsymbol{d}^{\text{gen}})$
\EndWhile

\noindent $\textcolor{gray}{\rhd{\textit{ Nearest-neighbor rounding:}}}$
\For {each $n = 1$ to $N$}

\If {$F(\text{ceil}(\dgen))$ < $F(\text{floor}(\dgen))$}
\State $\dgen= \text{ceil}(\dgen)$
\Else 
\State {$\dgen= \text{floor}(\dgen)$}
\EndIf
\State Clip $\dgen$ to the interval $[\dgenmin, \dgenmax]$
\State Update $\boldsymbol{d}^{\text{gen}}$
\EndFor
\State \Return $\boldsymbol{d}^{*,\text{gen}}$
\end{algorithmic}
\end{algorithm}
\setlength{\textfloatsep}{2pt}
\subsection{Nash Equilibrium Analysis}
Directly finding the NE of $\mathcal{G}$ is challenging because deriving the closed-form expression of the fixed point of organizations' best response mapping is complicated. To analyze $\mathcal{G}$, we show that $\mathcal{G}$ is a weighted potential game \cite{monderer1996potential}. Then, we derive the NE of $\mathcal{G}$ by solving the minimization problem over the corresponding weighted ordinary potential function.
\begin{theorem}
\label{def:weighted_pg}
$\mathcal{G}$ is a weighted potential game with the potential function $F(\boldsymbol{d}^{\text{gen}})$, which is calculated by
\begin{align}
\label{eq:theorem1}
F(\boldsymbol{d}^{\text{gen}}) = \esilontotaldgen - \sum_{n \in \mathcal{N}} \frac{\kappa_nC_n^{\text{cmp}} (\eta_n  +\mu_n)\dgen f_n^2}{z_n},
\end{align}
\end{theorem}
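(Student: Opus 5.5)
The plan is to verify the defining identity of a weighted potential game \cite{monderer1996potential} directly. I want positive weights $\{w_n\}_{n\in\mathcal{N}}$, with $w_n$ tied to the constant $z_n$ in \eqref{eq:theorem1}, such that for every organization $n$, every pair of strategies $\dgen,\dgenscript$ and every $\boldsymbol{d}^{\text{gen}}_{-n}$,
\begin{align*}
U_n(\dgen,\boldsymbol{d}^{\text{gen}}_{-n})-U_n(\dgenscript,\boldsymbol{d}^{\text{gen}}_{-n}) = -w_n\big[F(\dgen,\boldsymbol{d}^{\text{gen}}_{-n})-F(\dgenscript,\boldsymbol{d}^{\text{gen}}_{-n})\big].
\end{align*}
The minus sign reflects that the NE is then obtained by minimizing $F$, as announced before the theorem. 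The sign convention on the cost term in \eqref{eq:theorem1} (and so whether $z_n$ enters with a sign) will be fixed once the cost term is matched in the second step.

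The first step is to expand $U_n$ in \eqref{eq:client_utility_one_round} and isolate the pieces that depend on $\dgen$ when $\boldsymbol{d}^{\text{gen}}_{-n}$ is fixed.
\begin{itemize}
\item The constants $\psi_n\epsilon_0$, $C_0$ and the $\dloc$ part of $C_n$ cancel in the unilateral difference.
\item The cooperation gain contributes $-\psi_n\,\epsilon(\boldsymbol{d}^{\text{gen}})$.
\item By \eqref{eq:computation_overhead}--\eqref{eq:cost}, the cost contributes $-\kappa_nC_n^{\text{cmp}}(\eta_n+\mu_n)f_n^2\,\dgen$, which is linear in $\dgen$.
\item By \eqref{eq:coopetition_loss_1}--\eqref{eq:pay_off_redistribution_2}, the redistribution and competition terms combine into $\sum_{n'\neq n}(\xi-\phi_{n'})\gamma_{n,n'}\big[\epsilon_n-\epsilon_{n'}\big]$. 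Here $\epsilon_{n'}$ depends only on $d^{\text{gen}}_{n'}$ by \eqref{eq:local_loss_error}, so only the $\epsilon_n$ part changes under a unilateral deviation. Its coefficient is $-\sum_{n'\neq n}(\phi_{n'}-\xi)\gamma_{n,n'}\le 0$ under Remark~\ref{rem:1}.
\end{itemize}

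The second step is to match terms. The $-\psi_n\epsilon$ term matches the $\epsilon(\boldsymbol{d}^{\text{gen}})$ term of $F$ with weight $w_n=\psi_n$. The linear cost term matches the $n$-th summand of $F$ with weight $z_n$ once $z_n$ is proportional to $\psi_n$. The cross terms $n''\neq n$ in the sum in $F$ do not move under a unilateral deviation, so they cancel exactly.

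The main obstacle is the residual redistribution/competition term in $\epsilon_n$. It is a function of $\dgen$ alone but not of the form $\psi_n\epsilon$ or linear in $\dgen$. My first attempt is to exploit the structure of \eqref{eq:data_size_loss}, which gives
\begin{align*}
\frac{\partial \epsilon(\boldsymbol{d}^{\text{gen}})}{\partial \dgen}=\frac{\epsilon(\boldsymbol{d}^{\text{gen}})}{N\varrho}\,\frac{\partial\epsilon_n}{\partial \dgen}.
\end{align*}
The marginal effect of $\epsilon_n$ is therefore a rescaling of that of the global loss. I would then fold $\sum_{n'}(\phi_{n'}-\xi)\gamma_{n,n'}$ into the effective weight. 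That is, I would redefine $z_n$, and hence $w_n$, as $\psi_n$ plus the competition/redistribution coefficient scaled by $N\varrho/\epsilon$ evaluated per round, and check the identity at the level of derivatives, i.e. $\partial U_n/\partial\dgen=-w_n\,\partial F/\partial\dgen$. Since the strategy sets are intervals, this is equivalent to the difference identity by integrating along the $n$-th coordinate.

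If the $\epsilon$-dependence of this rescaling prevents exact equality, I would fall back on a per-round linearization. I would evaluate the factor at the current global loss (treated as fixed within round $t$, consistent with the stage-game formulation), and state $z_n$ accordingly, which is exactly the constant-weight form \eqref{eq:theorem1} requires. Finally, I would note that every $w_n$ is positive under Remark~\ref{rem:1} and $\psi_n\ge0$. This gives a weighted potential game, so unilateral improvements decrease $F$ and minimizers of $F$ are NE, which is what Algorithm~\ref{alg:fpi_method} exploits.
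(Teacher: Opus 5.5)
Your expansion of $U_n$ is correct, and you have found the step that matters. Write $c_n=\sum_{n'\neq n}\gamma_{n,n'}(\xi-\phi_{n'})$ and $K_n=\kappa_nC_n^{\text{cmp}}(\eta_n+\mu_n)f_n^2$, and let $\Delta$ denote the change under a unilateral deviation $\dgen\to\dgenscript$. Then $\Delta U_n=-\psi_n\Delta\epsilon+c_n\Delta\epsilon_n-K_n\Delta\dgen$, and the middle term involves the \emph{local} loss \eqref{eq:local_loss_error}.

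The paper's proof handles this term in one line. It rewrites $c_n\Delta\epsilon_n$ as $c_n\Delta\epsilon$, i.e.\ it identifies the change of $\epsilon_n$ with the change of the global loss. It then merges this with $-\psi_n\Delta\epsilon$ to obtain the constant factor $z_n=c_n-\psi_n$ and reads off $\Delta U_n=z_n\Delta F$.

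Your proposal does not close this step. Your first attempt gives the weight $\psi_n-c_nN\varrho/\epsilon(\boldsymbol{d}^{\text{gen}})$. That weight depends on the whole profile, including $\dgen$ itself, so it does not define a weighted potential game. A derivative identity with a moving weight also does not integrate along the $n$-th coordinate into a difference identity. Your fallback of freezing $\epsilon$ is an approximation, not a proof, and it changes the stage game, since $\epsilon$ is exactly what the round's strategies determine. Even then it yields $z_n=-\psi_n+c_nN\varrho/\bar\epsilon$ rather than the stated $z_n=c_n-\psi_n$. Your intermediate matching with $w_n=\psi_n$ has the same problem, since it forces $z_n=-\psi_n$.

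In fact no argument can close this step under \eqref{eq:local_loss_error} and \eqref{eq:data_size_loss}:
\begin{enumerate}
\item Fix $\dgen\neq\dgenscript$. Then $\Delta\epsilon$ carries the factor $\exp\bigl((\frac{1}{N}\sum_{m\neq n}\epsilon_m-1)/\varrho\bigr)$, which varies with $\boldsymbol{d}^{\text{gen}}_{-n}$, while $\Delta\epsilon_n$ and $\Delta\dgen$ do not.
\item Hence any constant $\lambda_n$ with $\Delta U_n=\lambda_n\Delta F$ must be $\lambda_n=-\psi_n$.
\item The leftover $c_n\Delta\epsilon_n$ is nonlinear in $\dgen$, so it cannot be matched by the linear term of $F$.
\end{enumerate}
So \eqref{eq:theorem1} is an exact weighted potential only when every $c_n=0$. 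The missing step in your proposal is precisely the local-to-global identification that the paper asserts without justification.

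What you can prove rigorously, with almost no extra work, is a nearby statement. For $\psi_n>0$, $\mathcal{G}$ is a weighted potential game with weights $\psi_n$ and potential, to be minimized,
\[
\epsilon(\boldsymbol{d}^{\text{gen}})+\sum_{n}\bigl[K_n\dgen-c_n\epsilon_n\bigr]/\psi_n .
\]
This works because the residual depends on $\dgen$ alone, so it fits into a separable summand exactly as the cost does. Your ``obstacle'' therefore blocks only the specific form of $F$ in the statement, not the potential-game property.
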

\noindent where ${z_n} = \sum_{n' \in \mathcal{N}} \left[\gamma_{n, {n'}} (\xi -\phi_{n'})\right] - \psi_n < 0.$
\begin{proof}
    The proof can be referred to Appendix~\ref{proof:theorem1}.
\end{proof}
Since $\mathcal{G}$ is a weighted potential game and $z_n < 0$, its NE corresponds to the (global or local) optimal solution to the following minimization problem, i.e., 
\begin{subequations}
\label{eq:optim_func}
\begin{align}
\label{eq:opt_func}
\min_{\boldsymbol{d}^{\text{gen}}}, \quad & \potfunc \\  
\noindent \textrm{s.t.} \quad  & \dgenmin \leq \dgen \leq \dgenmax, \dgen\in \mathbb{Z}_+, \\
&  \eqref{eq:ir}, \eqref{eq:bb}.
\end{align}
\end{subequations}
\eqref{eq:optim_func} is a non-convex optimization problem. To overcome this non-convexity challenge, we relax the value of $d^{{\text{gen}}}_n$ (in number of data samples) to the interval $[\dgenmin, \dgenmax]$. The problem \eqref{eq:optim_func} can therefore be recast as follows:
\begin{subequations}
\label{eq:optim_func2}
\begin{align}
\label{eq:optim_fun3}
\min_{\boldsymbol{d}^{\text{gen}}} \quad & \potfunc \\  
\noindent \textrm{s.t.} \quad  & \dgenmin \leq \dgen \leq \dgenmax,\dgen\in \mathbb{R}_+, \\
&  \eqref{eq:ir}, \eqref{eq:bb}.
\end{align}
\end{subequations}
By solving \eqref{eq:optim_func2}, we show NE of $\mathcal{G}$.

\begin{theorem}
\label{eq:theorem_2}
    $\mathcal{G}$ possesses a NE of $\boldsymbol{d}^{*,\text{gen}} = \{\dgenopt: \forall n \in \mathcal{N}\}$ under different scenarios, where we define the following variables:
\begin{subequations}
\begin{empheq}[left={}\empheqlbrace]{align}
& A_1 \delequal \frac{1}{N} \sum\nolimits_{n \in \mathcal{N}}\left[\alpha(\dloc + \dgenopt)^{-\beta} - \delta \right] \\ 
& A_2 \delequal  {\kappa_n C_n^{\text{cmp}} (\eta_n  +\mu_n) f_n^2}/{z_n} \\ 
& A_3 \delequal (\dloc + \dgenopt)^{-\beta - 1}
\end{empheq}
\end{subequations}
\begin{itemize}[leftmargin=*]
    \item \underline{Scenario 1}: $\dgenopt = \dgenmin$ iff 
$
    A_3 \frac{\alpha \beta}{N \varrho} \text{exp}\left(\frac{A_1 - 1}{\varrho}\right) < -A_2,
$

    \item \underline{Scenario 2}: $\dgenopt = \dgenmax$ iff 
$
    A_3 \frac{\alpha \beta}{N \varrho} \text{exp}\left(\frac{A_1 - 1}{\varrho}\right) > -A_2,
$

    \item \underline{Scenario 3}: $\dgenmin <\dgenopt < \dgenmax$ and we have
 \begin{align}
 \label{eq:kkt_case3}
   \dgenopt = \biggl[-\frac{A_2 N \varrho}{\alpha \beta} \text{exp}\left(-\frac{A_1 - 1}{\varrho} \right)\biggr] ^ {-\frac{1}{\beta+1}} - \dloc,  
 \end{align}   
with the following update rule for $\{u_n^{(k)}\}_{n=1}^{N}$ using the fixed point iteration:
\begin{align}
& \forall{n} \in \{1,...,N\}, u_n \delequal \dgen + \dloc, \notag\\
&   u_n^{(k+1)}= \left[-\frac{A_2 N \varrho}{\alpha \beta} \text{exp}\left(-\frac{\frac{1}{N} \sum\nolimits_{n \in \mathcal{N}}\left[\alpha(u_n^{(k)})^{-\beta} - \delta \right]- 1}{\varrho} \right)\right] ^ {-\frac{1}{\beta+1}}, \nonumber
\end{align}
where $u_n^{(k)}$ converges to ($\dgenopt+\dloc$) as $k \rightarrow \infty$. We can then get the value of $\dgenopt$.
\end{itemize}
\label{theo:2}
\end{theorem}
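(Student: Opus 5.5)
We derive the characterization from the first-order (KKT) conditions of the relaxed problem \eqref{eq:optim_func2}, using Theorem~\ref{def:weighted_pg}: since $\mathcal{G}$ is a weighted potential game with potential $F(\boldsymbol{d}^{\text{gen}})$ and $z_n<0$, any minimizer of $F$ over the box $\prod_n[\dgenmin,\dgenmax]$ is a NE. Existence of such a minimizer is immediate. $F$ is continuous, since $\epsilon(\cdot)$ is a composition of $\exp$ with power functions of $\dloc+\dgen>0$, and the feasible set is compact, so Weierstrass applies. The constraints \eqref{eq:ir} and \eqref{eq:bb} are treated as side conditions checked at the solution rather than entering the stationarity conditions; for \eqref{eq:bb}, I would note that $p_{n,n'}$ and $p_{n',n}$ are antisymmetric whenever $\gamma_{n,n'}=\gamma_{n',n}$.

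\textbf{Partial derivative of $F$.} Using \eqref{eq:data_size_loss} and \eqref{eq:local_loss_error}, and writing $A_1,A_2,A_3$ as in the statement,
\begin{align}
\frac{\partial F}{\partial \dgen} = -\frac{\alpha\beta}{N\varrho}A_3\exp\left(\frac{A_1-1}{\varrho}\right) - A_2 .\nonumber
\end{align}
The KKT conditions for box constraints then read as follows. If $\partial F/\partial \dgen>0$ at the optimum, we are at the lower bound, which is Scenario~1. If $\partial F/\partial\dgen<0$, we are at the upper bound, which is Scenario~2. Otherwise the derivative vanishes, which is Scenario~3. Rearranging each sign condition gives exactly the three inequalities in the statement. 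Because $z_n<0$, we have $-A_2>0$, so Scenario~3 is well posed. Setting the derivative to zero gives $A_3=-\frac{A_2N\varrho}{\alpha\beta}\exp\left(-\frac{A_1-1}{\varrho}\right)$. Raising both sides to $-1/(\beta+1)$ yields \eqref{eq:kkt_case3}.

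\textbf{Fixed-point characterization.} Equation \eqref{eq:kkt_case3} is only implicit, since $A_1$ depends on every $u_{n'}=d^{\text{gen}}_{n'}+d^{\text{loc}}_{n'}$. I would write it as $u=\Phi(u)$, with $\Phi_n(u)=c_n\exp\left(\frac{\frac1N\sum_{n'}(\alpha u_{n'}^{-\beta}-\delta)-1}{(\beta+1)\varrho}\right)$ and $c_n=\left(-\frac{A_2N\varrho}{\alpha\beta}\right)^{-1/(\beta+1)}>0$. The iteration in the theorem is then $u^{(k+1)}=\Phi(u^{(k)})$. Existence of a fixed point in the box follows from Brouwer, because $\Phi$ is continuous and, after clipping, maps the box into itself.

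\textbf{Main obstacle: convergence.} The hard step is showing that $u_n^{(k)}\to \dgenopt+\dloc$. I would first try to prove that $\Phi$ is a contraction on $[\dgenmin+\dloc,\dgenmax+\dloc]$ in the $\ell_\infty$ norm. Its Jacobian entries are
\[
\partial\Phi_n/\partial u_{n'}=-\Phi_n\,\frac{\alpha\beta}{N(\beta+1)\varrho}\,u_{n'}^{-\beta-1},
\]
so the row sums are bounded by $\max_n\Phi_n\cdot\frac{\alpha\beta}{(\beta+1)\varrho}(\dgenmin+\dloc)^{-\beta-1}$. Contraction therefore holds whenever this quantity is below $1$, which requires $\dloc$ large relative to $\alpha$ and $\varrho$. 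Banach's theorem then gives uniqueness and geometric convergence.

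If this condition is too restrictive, I would use monotonicity instead. $\Phi$ is antitone, so $\Phi\circ\Phi$ is isotone. The even and odd subsequences are then monotone and bounded, and hence convergent; a contraction-type argument is needed only to show that their two limits coincide. Alternatively, I would rely on the potential structure: each Scenario-3 update is a best-response step that does not increase $F$, which is also the stopping criterion used in Algorithm~\ref{alg:fpi_method}. Since $F$ is bounded below on the box, $F^{k}$ converges, and limit points are KKT points, hence NE by Theorem~\ref{def:weighted_pg}.
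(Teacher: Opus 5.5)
Your scenario analysis is the paper's KKT case analysis. Your Banach route is also sound as far as it goes. Since $\partial F/\partial d_n^{\text{gen}}=-A_2\bigl(1-(\Phi_n(u)/u_n)^{\beta+1}\bigr)$ has the sign of $u_n-\Phi_n(u)$, the fixed points of the clipped map are exactly the KKT points of the box problem. Every NE and the Weierstrass minimizer are among them, so a contraction identifies the limit with $d_n^{*,\text{gen}}+d_n^{\text{loc}}$.

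What you omit is the paper's first step: $F$ is strictly convex. This holds because $\epsilon=\exp(g)$ with $g$ a positive combination of the strictly convex $u_n^{-\beta}$, and the cost term is linear. Its Hessian is $e^{(A_1-1)/\varrho}\bigl(\boldsymbol{Z}\boldsymbol{Z}^T+\mathrm{diag}(\tfrac{\alpha\beta(\beta+1)}{N\varrho}u_n^{-\beta-2})\bigr)\succ 0$ with $Z_n=\tfrac{\alpha\beta}{N\varrho}u_n^{-\beta-1}$. Convexity gives a unique NE and makes KKT sufficient with no extra hypothesis. Without it, the last step of your descent fallback (``limit points are KKT points, hence NE'') is unjustified, since Theorem 1 only turns minimizers of $F$ into NE.

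The genuine gap is convergence. Your contraction bound is an extra hypothesis that the statement does not contain, and the fallbacks do not remove it. The descent argument fails because the theorem's update computes all $u_n^{(k+1)}$ simultaneously, each with $A_1$ frozen at $u^{(k)}$ (including $u_n^{(k)}$ itself). So it is not even a unilateral best response, and simultaneous updates need not decrease a potential. The $\Phi\circ\Phi$ argument stalls exactly where you say it does.

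The paper does not close this gap either. It invokes Yates' standard interference functions with ``strictly decreasing'' in place of Yates' monotonicity axiom ($u\ge u'\Rightarrow M(u)\ge M(u')$), and that theorem says nothing about antitone maps. In fact some condition is unavoidable. Because $\Phi(u)$ depends on $u$ only through $A_1$, the iteration collapses to the scalar map $a\mapsto Ke^{-\theta a}-\delta$, with $K>0$ and $\theta=\beta/((\beta+1)\varrho)$. At its fixed point the slope has modulus $\theta(a^*+\delta)=\frac{\alpha\beta}{(\beta+1)\varrho}\cdot\frac{1}{N}\sum_n (u_n^*)^{-\beta}$. When this exceeds $1$, the iterates approach a 2-cycle rather than the equilibrium. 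So state a condition of this type (your $\ell_\infty$ bound, or the sharper scalar one) as an explicit hypothesis of the convergence claim.
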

\begin{proof}
The proof, including the fixed point iteration convergence proof, can be referred to Appendix~\ref{proof:theorem2}. 
\end{proof}


The procedure for computing approximate solutions $\boldsymbol{d}^{*,\text{gen}}$ is presented in Algorithm~\ref{alg:fpi_method}. 
The solution is verified against constraints in \eqref{eq:ir} and \eqref{eq:bb}.
The algorithm has a time complexity of $\mathcal{O}(N(N+K))$, where $K$ is the number of iterations until convergence. 
\section{Experiments}
\label{sec:performance_evaluation}

{This section empirically evaluates {\proposed} with three objectives. First, we assess the feasibility of the proposed modeling framework and algorithmic design by verifying whether the adopted scaling law accurately captures the relationship between training data volume and local learning error, and by examining how competition and data heterogeneity influence synthetic data generation strategies and the resulting social welfare (\textbf{\#RQ1} \textbf{\#RQ2}). Second, we analyze how the payoff redistribution-based incentives affect the social welfare and individual utilities, verifying that IR is sustained and that redistribution provides fair compensation to coopetitive organizations (\textbf{\#RQ1}, \textbf{\#RQ3}). Third, we demonstrate the superior social welfare of {\proposed} compared with other baselines across varying competitive intensities, heterogeneity levels, and learning task complexities (\textbf{\#RQ1}, \textbf{\#RQ2}, and \textbf{\#RQ3}).}

\subsection{Experiment Setup}
\label{subsec:exp_setup}

{We consider a CFL system with $N = 10$ organizations, consistent with the typically small number of participants in CFL~\cite{kairouz2021advances}. To represent different levels of inter-organizational competition, we set the competitive intensity to $\bar\gamma = \{0.0956, 0.4782, 0.8956\}$, where each value is computed as the mean of competitive intensity across all organizations with individual intensities drawn from $U(0, 0.2), U(0, 1), \text{ and } U(0.8, 1)$, respectively. These three cases correspond to low, moderate, and high competition.}


\textbf{Datasets.} Our experiments are conducted on three datasets, including Fashion-MNIST \cite{xiao2017fashion}, CIFAR-10 \cite{krizhevsky2009learning}, and CIFAR-100 \cite{krizhevsky2009learning}, with different learning difficulty levels being simple, medium, and complex, respectively \cite{qin2023fedapen}. To model data heterogeneity among organizations, we sample label proportions $p\sim\textit{Dir}{(\alpha_D)}$, where $\alpha_D = \{0.1, 0.5, 0.9\}$ is the Dirichlet distribution parameter to control the degree of label imbalance. A smaller $\alpha_D $ means a higher heterogeneity level.

\textbf{Implementation.} We conduct experiments over three datasets with Flower framework \cite{beutel2020flower} on 4 NVIDIA L40 GPUs with 48GB memory. For Fashion-MNIST and CIFAR-10, we use MobileNetV3-small \cite{howard2019searching} and MobileNetV2 \cite{sandler2018mobilenetv2}, respectively. For CIFAR-100, we use ResNet-34 \cite{he2016deep}. The batch size is 16, and the learning rate is 0.01. {Unless otherwise stated, key parameter settings are shown in Table~\ref{tab:parameters}.} 
\begin{table}[t!]
    \caption{{Parameter settings.}}
    \footnotesize
    \label{tab:parameters}
    \centering
    \begin{tabular}{|ll|ll|}
        \hline
    \rowcolor[HTML]{EFEFEF} 
      \textbf{Param}  & \textbf{Value} & \textbf{Param} & \textbf{Value}  \\\hline \hline
    $\kappa_n$ & $10^{-28}$  \cite{tran2019federated}&  $\phi_n$ & $U(2\times10^{2},4\times10^{2})$  \\
    
     $\varrho$ & 6  & $\dloc$ & $U(10, 3000)$ samples  \\
      
     $f_n$ & $U(1, 5)$  GHz   & $\xi$  & $ \{10,50,90,...,250\} $\\ 
        
    N & 10  & $\gamma_{n, n'}$ & \{$U(0, 0.2), U(0, 1), U(0.8, 1)\}$ \\ 

    $\mu_n$ & $3\times10^{6}$   & $\alpha_D$ &  \{0.1, 0.5, 0.9\} \\ 
  $d_{\text{min}}^{\text{gen}}$ & 0 samples& $d_{\text{max}}^{\text{gen}}$ & 6000 samples\\

    $\eta_n$ & $3\times10^{6}$ & $\psi_n$ & $U(4\times10^{2},6\times10^{2})$ \\
         \hline
    \end{tabular}
\end{table}

\textbf{Baselines.} Existing work offers limited applicable baselines for coopetitive-aware data generation among organizations in heterogeneous CFL settings. We therefore adopt the following approaches as comparative baselines.
\begin{itemize}
    \item \textbf{Vanilla CFL (VCFL).} Traditional CFL without GenAI-based data generation approach and without considering competition among organizations (i.e., $\dgen = 0, \gamma_{n,n'} = 0, \forall n, n' \in \mathcal{N}, n' \neq n$).
    \item  \textbf{Without Competition among Organizations (WCO).} In WCO, there is no competition among organizations (i.e., $\dgen \neq 0, \gamma_{n,n'} = 0, \forall n, n' \in \mathcal{N}, n' \neq n$).
    \item  \textbf{Without Data Generation (WDG).} In WDG, GenAI-based data generation mechanisms are not applied, but payoff redistribution-based incentive solution is still used (i.e., $\dgen = 0, \gamma_{n,n'} \neq 0, \forall n, n' \in \mathcal{N}, n' \neq n$).
    \item \textbf{Random Data Generation (RaDG).} In this RaDG scheme, organizations randomly generate the amount of GenAI-based augmented data for local training.
    \item \textbf{Maximum Data Generation (MaDG).} In this MaDG scheme, organizations exhaust their local resources to generate the maximum amount of GenAI-based augmented data for local training (i.e., $\dgen = \dgenmax, \forall n \in \mathcal{N}$).
\end{itemize}


\subsection{\proposed \text{ Feasibility}}
\label{subsec:feasibility}

\paragraph{{Scaling laws for CFL}} We demonstrate that the scaling law function (cf. Eq.~\eqref{eq:local_loss_error}) effectively captures the interplay of data heterogeneity, data quantity, comprising both local and newly generated data, and the local learning performance (i.e., loss) across different datasets and model architectures in CFL.  As shown in Fig.~\ref{fig:hyperparameter}, for each heterogeneity level $\alpha_D$, we empirically obtain a corresponding set of hyperparameters $\{\alpha, \beta, \delta\}$ that characterize the local learning curve on three datasets. As the volume of GenAI-generated data increases while the number of data samples per organization remains fixed, the local learning error decreases significantly across different $\alpha_D$ values. This reduction saturates once the generated data reaches a sufficient volume, at which point the loss stabilizes. Notably, our hyperparameterized scaling law yields learning curves that closely match the empirical trajectories, supporting the validity and feasibility of the proposed model. These results demonstrate that GenAI-based data augmentation not only improves local model performance but also enables \textit{predictable} and \textit{controllable} improvements in downstream tasks in federated settings. Consequently, our framework provides a principled basis for deciding when and to what extent to scale synthetic data generation under different heterogeneous settings.
\begin{figure*}[ht!]
\captionsetup[subfigure]{skip=5pt} 
\centering
\begin{subfigure}[t]{.327\textwidth}
    \centering
    \includegraphics[width=\linewidth]{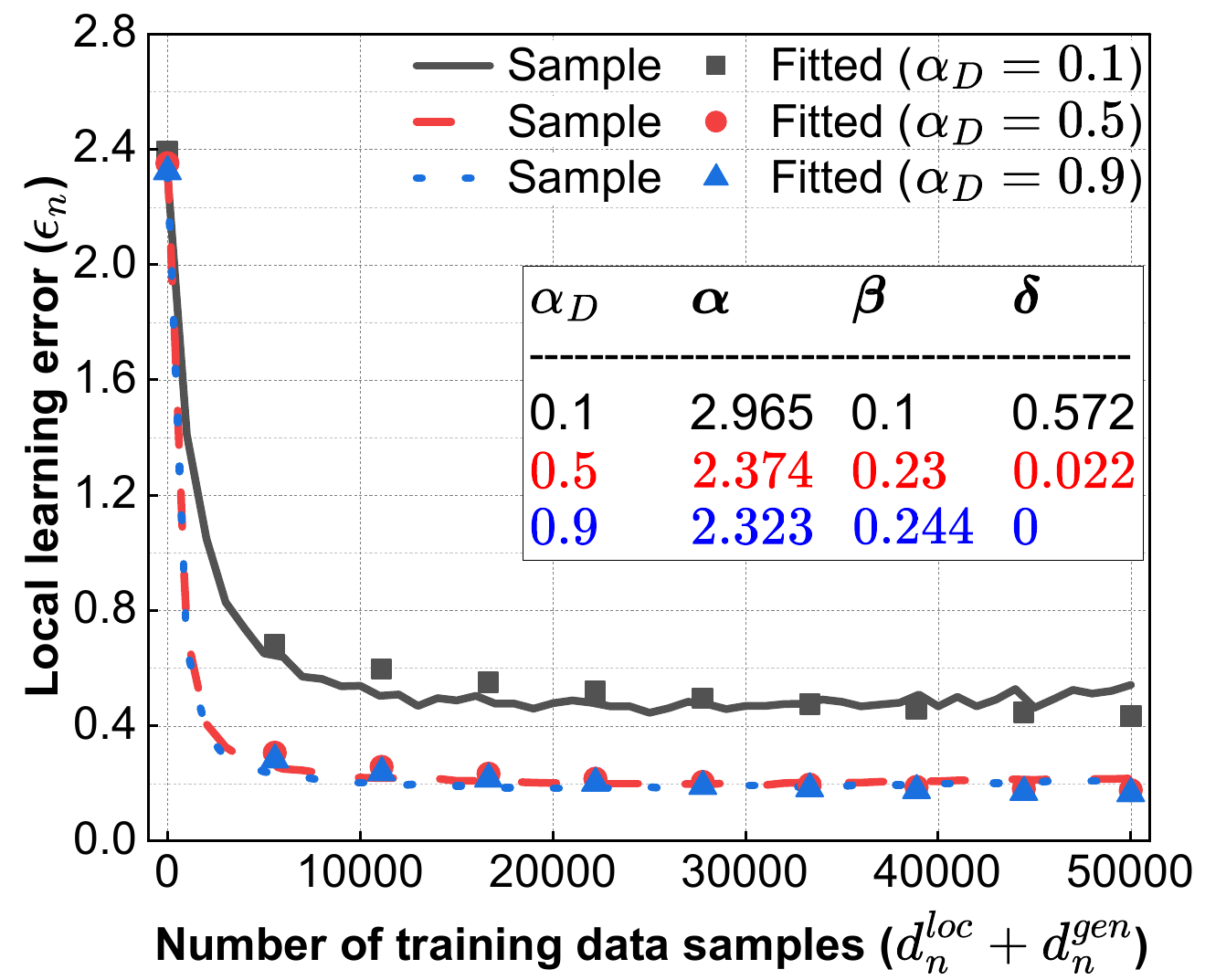}  
    \caption{Fashion-MNIST with MobilenetV3-small.}
    \label{fmnist:scaling_laws}
\end{subfigure} 
\begin{subfigure}[t]{.327\textwidth}
    \centering
    \includegraphics[width=\linewidth]{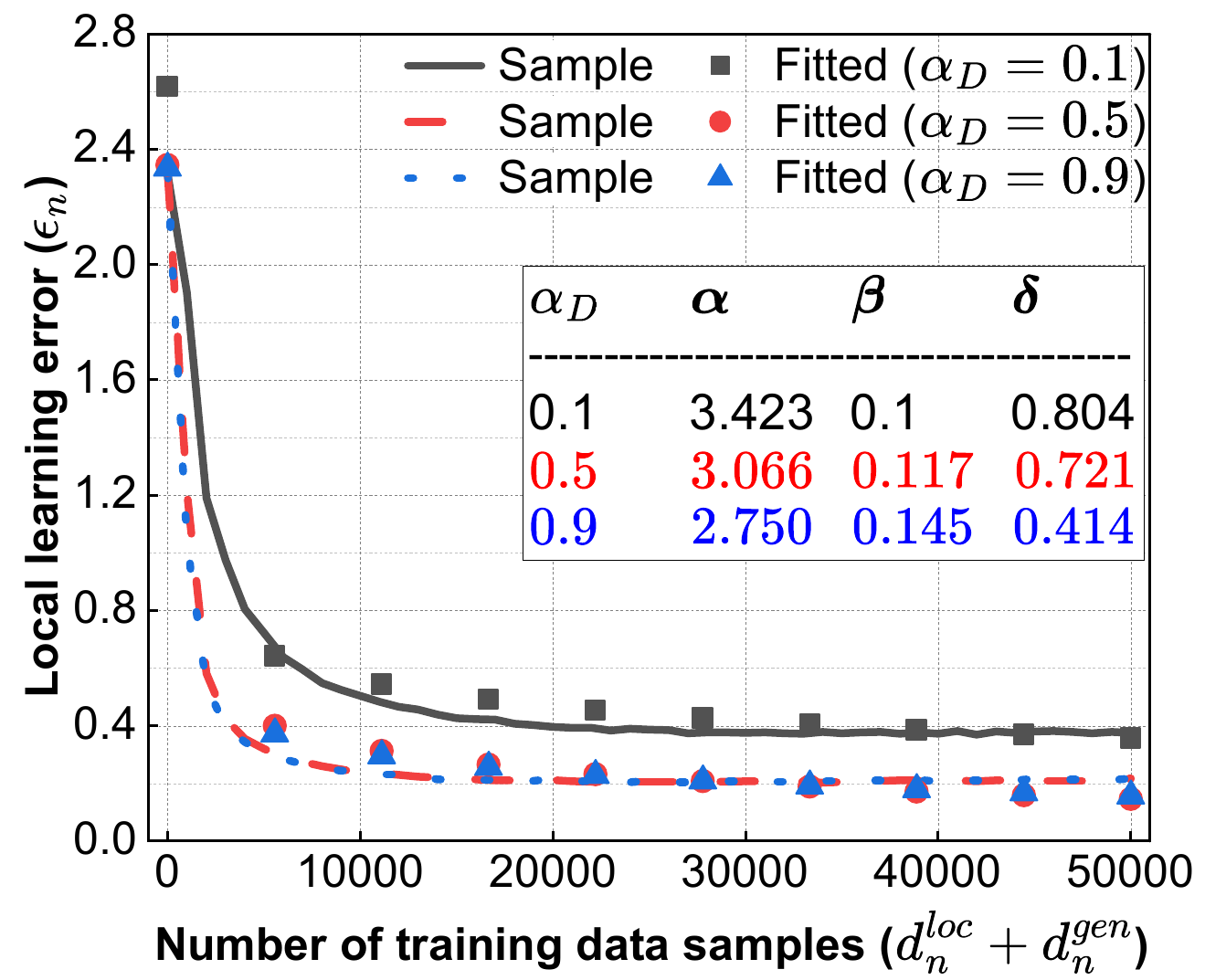}  
    \caption{CIFAR-10 with MobileNetV2.}
    \label{cifar10:scaling_laws}
\end{subfigure}
\begin{subfigure}[t]{.327\textwidth}
    \centering
    \includegraphics[width=\linewidth]{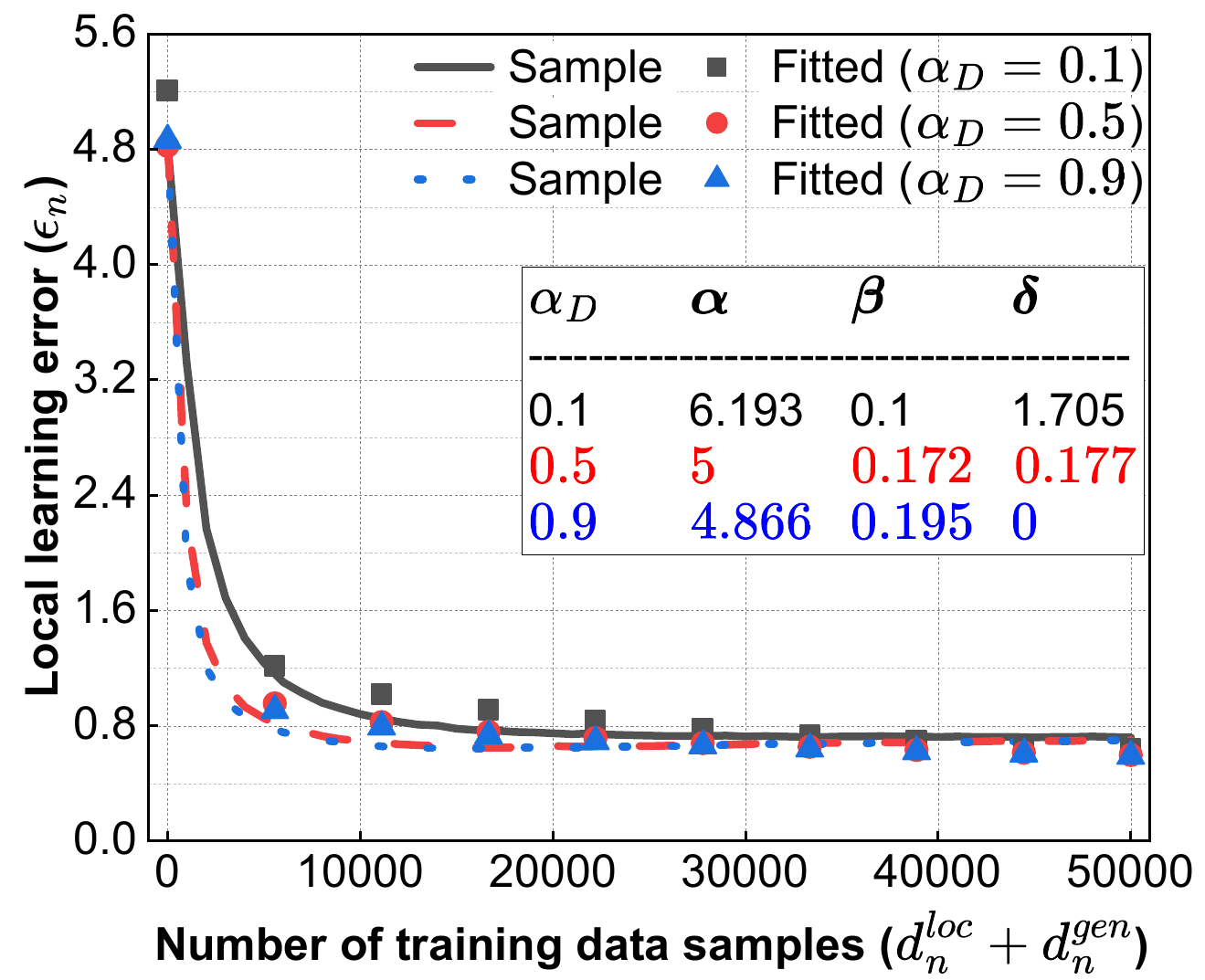}  
    \caption{CIFAR-100 with ResNet-34.}
    \label{cifar100:scaling_laws}
\end{subfigure} 
\caption{$\epsilon_n$ with respect to the number of local training data using different datasets and deep neural network architectures.}
\label{fig:hyperparameter}
\end{figure*}
\begin{figure}[ht!]
\captionsetup[subfigure]{skip=5pt} 
\begin{subfigure}[t]{0.49\textwidth}
    \centering
    \includegraphics[width=\linewidth]{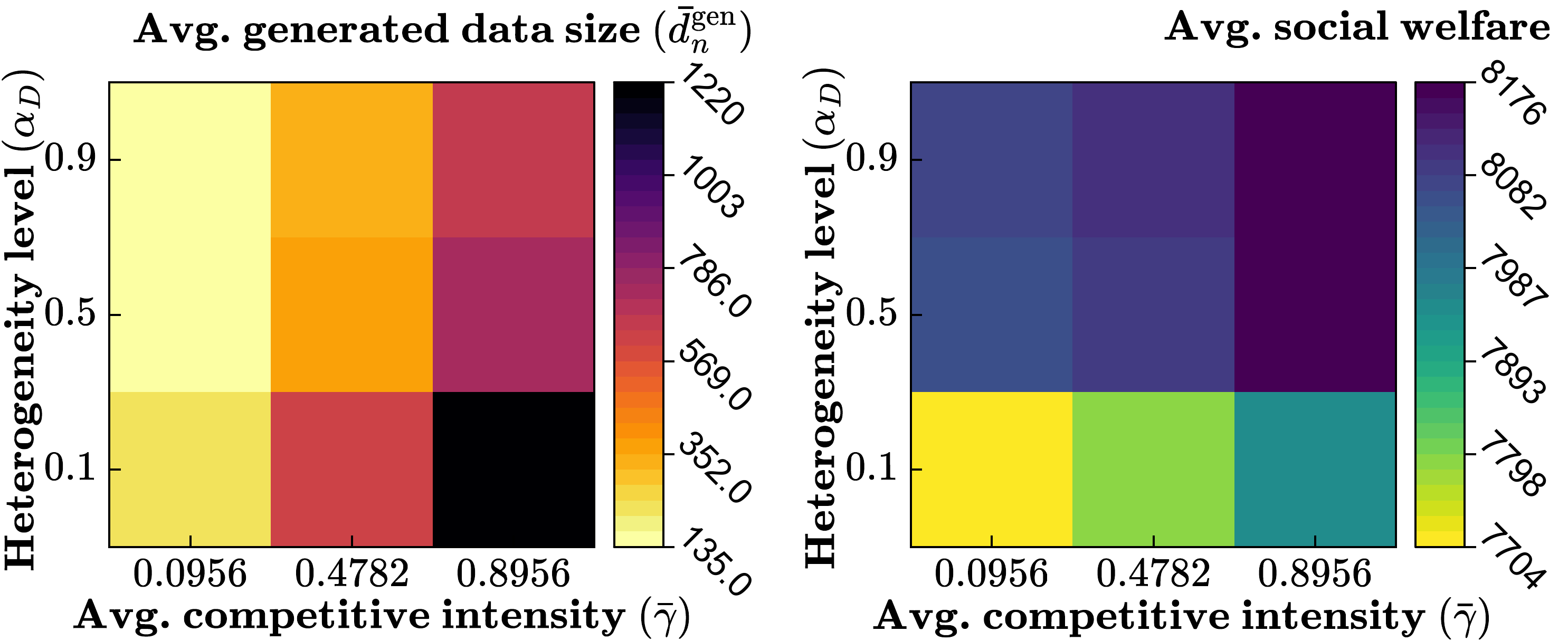}  
    \caption{Fashion-MNIST with MobileNetV3-small.}
    \label{fmnist:impacts_alphaD_gamma}
\end{subfigure} 
\begin{subfigure}[t]{0.49\textwidth}
    \centering
    \includegraphics[width=\linewidth]{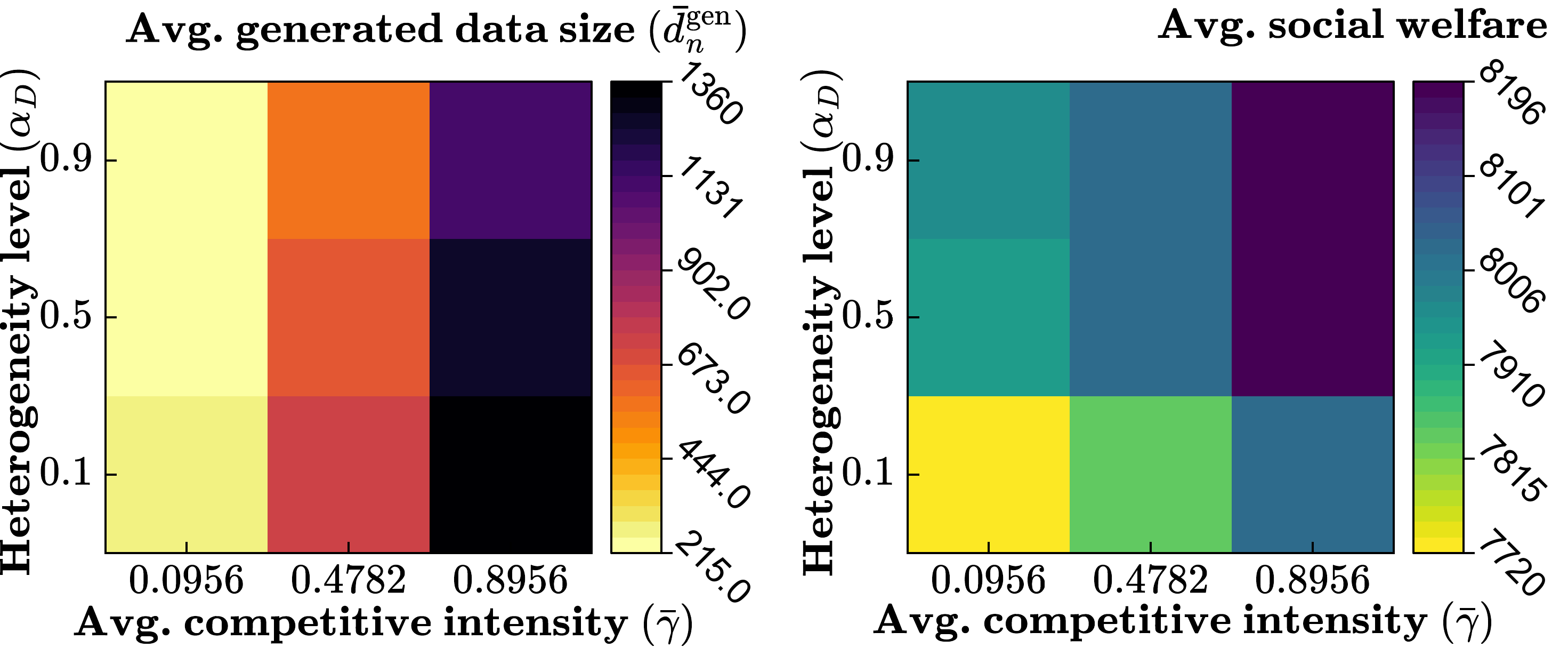}  
    \caption{CIFAR-10 with MobileNetV2.}
    \label{cifar10:impacts_alphaD_gamma}
\end{subfigure}
\begin{subfigure}[t]{0.49\textwidth}
    \centering
    \includegraphics[width=\linewidth]{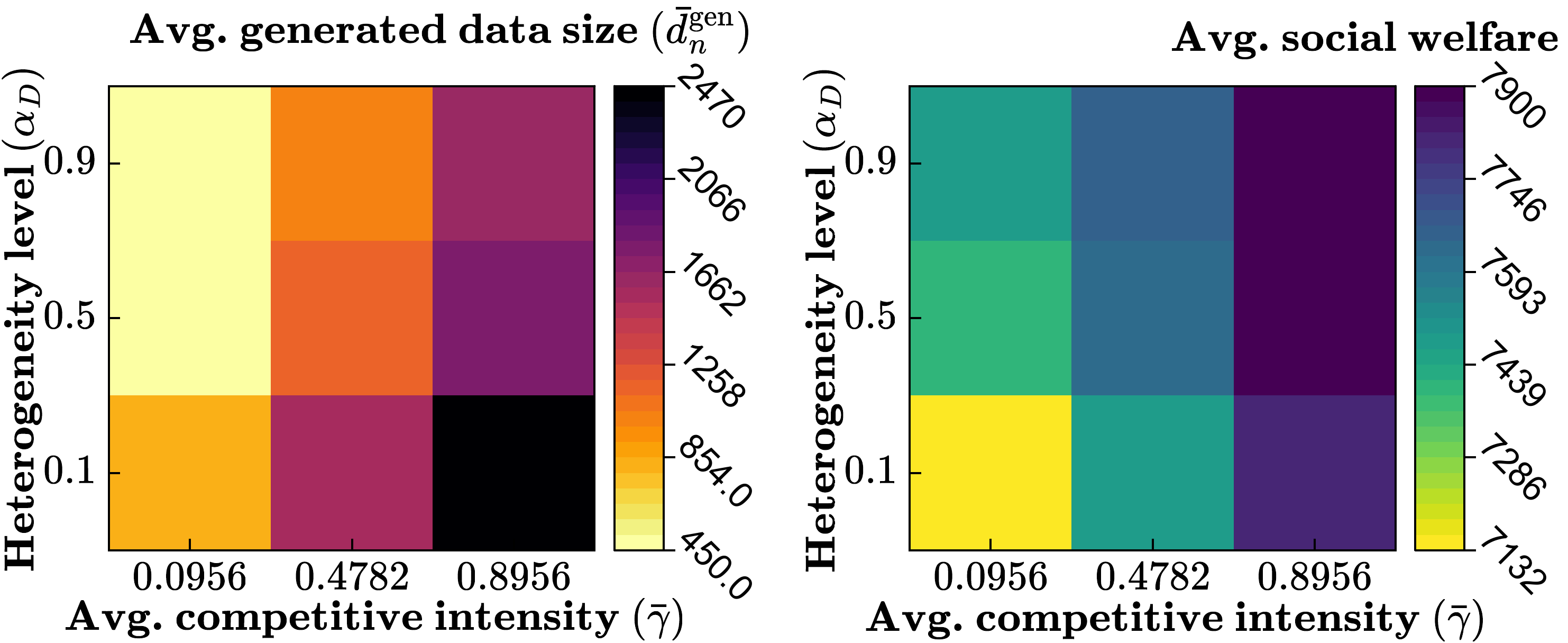}  
    \caption{CIFAR-100 with ResNet-34.}
    \label{cifar100:impacts_alphaD_gamma}
\end{subfigure} 
\caption{Impact of $\bar\gamma$ and $\alpha_{D}$ on the organization's data generation strategy and the social welfare of \proposed.}
\vspace{15pt}
\label{fig:impacts_alphaD_gamma_on_3_datasets}
\end{figure}

\paragraph{{Impacts of average competition intensity ($\bar\gamma$) and data heterogeneity ($\alpha_D$) levels}} In Fig.~\ref{fig:impacts_alphaD_gamma_on_3_datasets}, we examine the coupled effects of data heterogeneity ($\alpha_D$) and average competitive intensity ($\bar\gamma$) on 1) the average amount of GenAI-generated data among organizations and 2) the resulting social welfare, across three datasets with a fixed value\footnote{{Similar trends are observed on other values of $\xi$.}} of $\xi = 90$.  Under highly heterogeneous data conditions (e.g., $\alpha_D = 0.1$), increasing $\bar\gamma$ exacerbates the demand for substantial data augmentation volumes among organizations. This behavior is driven by the proposed payoff redistribution mechanism in Eq.~\eqref{eq:pay_off_redistribution_1}, wherein competitive intensity functions as a scaling multiplier for rewards. As $\bar\gamma$ intensifies, the marginal utility gained from outperforming competitive organizations increases, effectively incentivizing a tournament dynamic \cite{tong2002tournament} where organizations try to maximize their payoff defined in Eq.~\eqref{eq:pay_off_redistribution_2} through larger contributions. Also, organizations are compelled to scale their synthetic data volumes to offset the heightened competition loss (see Eq.~\eqref{eq:coopetition_loss_2}) arising from the strong competitive organizations that have more local data and to satisfy individual rationality constraints defined in Constraint~\ref{def:ir}.  While such data generation increases organizations’ computational burden, the integrated incentive-based payoff redistribution compensates organizations in proportion to their contributions (e.g., larger local datasets and/or higher amounts of synthetic data). This, in turn, raises individual utilities and leads to a net increase in social welfare across all three datasets.



On the other hand, under conditions of fixed competitive intensity (e.g., $\bar\gamma = 0.8956$) but increasing heterogeneity, organizations are required and motivated to generate more synthetic data through Theorem~\ref{theo:2} to close the statistical data distribution gaps due to heterogeneous local data to promote global model convergence. Particularly, harder learning tasks demand larger augmentation budgets. For example, on CIFAR-100, the maximum number of generated data samples reaches 2470, compared to 1220 on Fashion-MNIST and 1360 on CIFAR-10. However, organizations may incur substantial losses from cooperation, including the exposure of proprietary data features that contribute to their business advantage to other competitive organizations. Besides, the costs in training and generating data become significant under the highest heterogeneity level (i.e., $\alpha_D = 0.1$). As a result, it can further diminish the incentives for cooperation and also lower overall social welfare.

\begin{figure*}[t!]
	\centering
	\includegraphics[width=\linewidth]{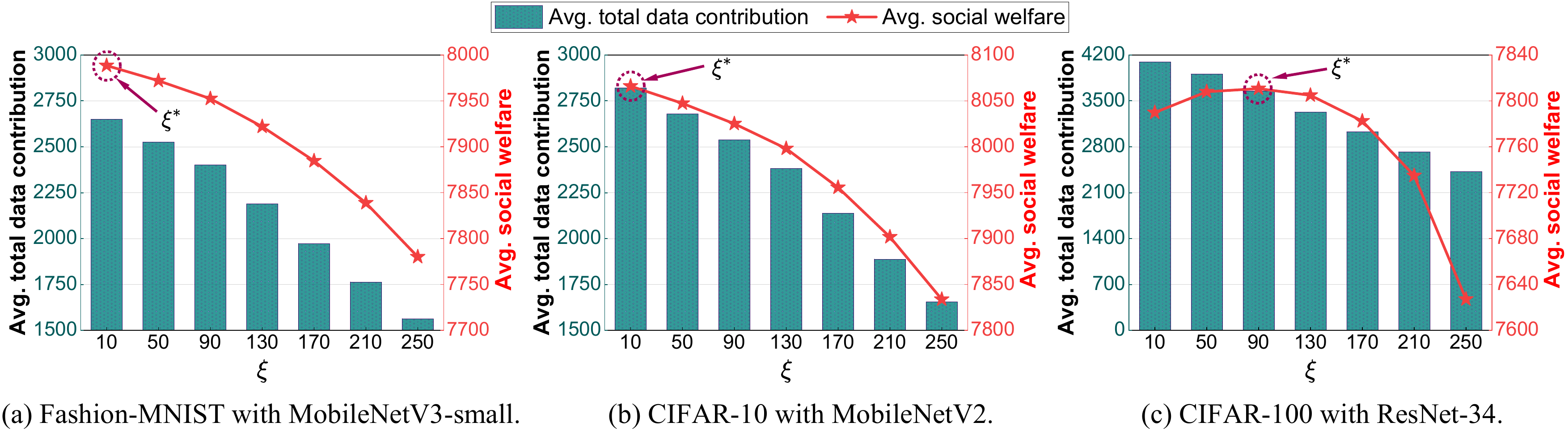}
	\caption{Impacts of payoff redistribution rate $\xi$ on data contributions across organizations and social welfare over three datasets with $\alpha_D = 0.1$ and $\bar \gamma = 0.8956$ in {\proposed}.}
    \label{fig:xi_impact}
\end{figure*} 

\begin{figure}[t!]
	\centering
	\includegraphics[width=0.96\linewidth]{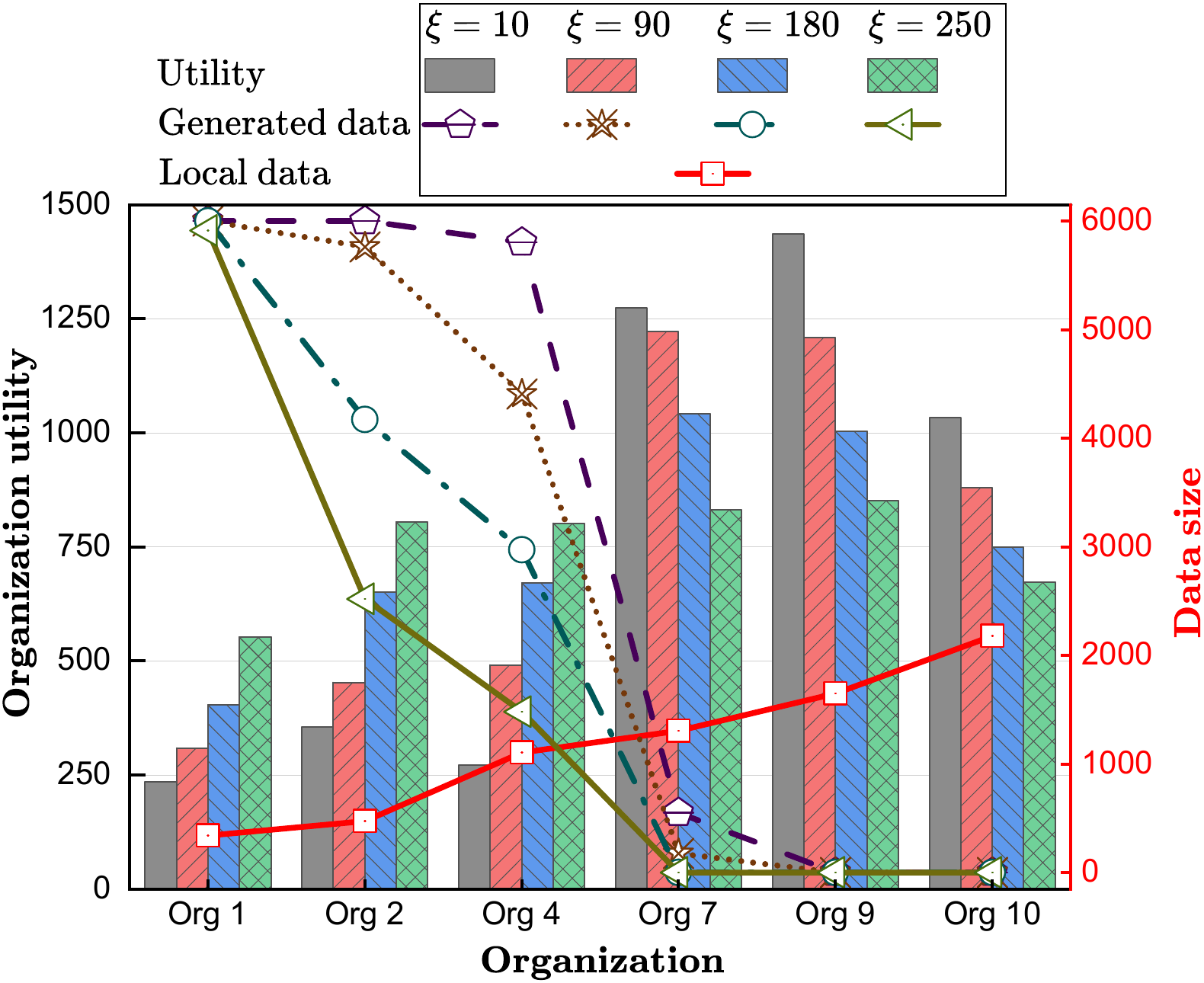}
	\caption{{The utility of organization across different $\xi$ with $\alpha_D = 0.1$ and $\bar \gamma = 0.8956$ evaluated on the most difficult learning task of CIFAR-100 with ResNet-34 in {\proposed}.}}
    \label{fig:org_utility_xi}
\end{figure} 

\paragraph{{Sensitivity to the payoff compensation rate  $\xi$}} Fig.~\ref{fig:xi_impact} evaluates the sensitivity of {\proposed } to the payoff-redistribution coefficient $\xi$ under strong heterogeneity ($\alpha_D=0.1$) and high competition intensity ($\bar{\gamma}=0.8956$). Since $\xi$ linearly scales the bilateral transfer $p_{n,n'}$ and thus $P_n$ (i.e., Eqs.~\eqref{eq:pay_off_redistribution_1}-\eqref{eq:pay_off_redistribution_2}), it alters each organization’s best response for data generation strategy. Although the redistribution-based incentive mechanism is budget-balanced (see Def.~\ref{def:pay_off_redistribution} and Constraint~\ref{def:bb}) and thus cancels in aggregate, it still affects social welfare \textit{indirectly} by reshaping equilibrium data generation per organization, learning performance, and incurred costs. Across three datasets, increasing $\xi$ monotonically reduces the average total data contribution (cf. Theorem~\ref{eq:theorem_2}), indicating that stronger redistribution progressively discourages synthetic data generation. 

The effect on social welfare, however, depends on task complexity. In contrast, for Fashion-MNIST and CIFAR-10, social welfare decreases monotonically as $\xi$ increases. This reduction directly weakens the global performance gains among organizations in Eq.~\eqref{eq:gain}, leading to lower social welfare, while the accompanying decreases in generation/training costs and coopetition losses are insufficient to compensate. In contrast, for CIFAR-100, the most challenging task, social welfare exhibits a non-monotonic trend. Social welfare increases and reaches its maximum of $\approx 7810$ at $\xi^\ast=90$, but then decreases to $\approx 7686$ for larger $\xi$. This behavior is driven by the trade-off induced by the redistribution term $P_n$ (cf. Eq.~\eqref{eq:pay_off_redistribution_2}) relative to the coopetition loss $R_n$ (cf. Eq.~\eqref{eq:coopetition_loss_2}). For small $\xi$, organizations tend to generate more synthetic data, but associated computational costs and increased competition loss can dominate, reducing social welfare. As $\xi$ increases toward $\xi^*$, redistribution partially internalizes competitive externalities such as losses from exposing valuable data features, curbing excessive generation, and improving net welfare. Beyond $\xi^*$ (see Remark~\ref{rem:contrast}), $P_n$ becomes large and can approach or exceed $R_n$, so contributing $\dmix$ of organization $n$ yields limited net benefit to competitors because they need to redistribute huge payoff. Consequently, organizations strategically reduce contributions, degrading global performance and social welfare. 

{Fig.~\ref{fig:org_utility_xi} further illustrates the utilities of organizations together with their local and synthetic data volumes across values of payoff compensation rate $\xi=\{10, 50, 90, 130, 170, 210, 250\}$ for the most challenging learning task, CIFAR-100. All organizations satisfy the IR constraint (cf. Constraint~\ref{def:ir}). As $\xi$ increases, organizations with limited local data reduce the synthetic data generation volume but gain improved utility because the proposed redistribution mechanism compensates them for the competitive loss associated with strengthening rivals through larger data contribution and model update sharing. In contrast, organizations with larger local datasets, which contribute less synthetic data, experience a decline in utility because they are required to transfer part of payoffs under the redistribution mechanism.}

\subsection{\proposed { Efficiency}}
\label{subsec:efficiency}

\begin{figure*}[t!]
	\centering
	\includegraphics[width=\linewidth]{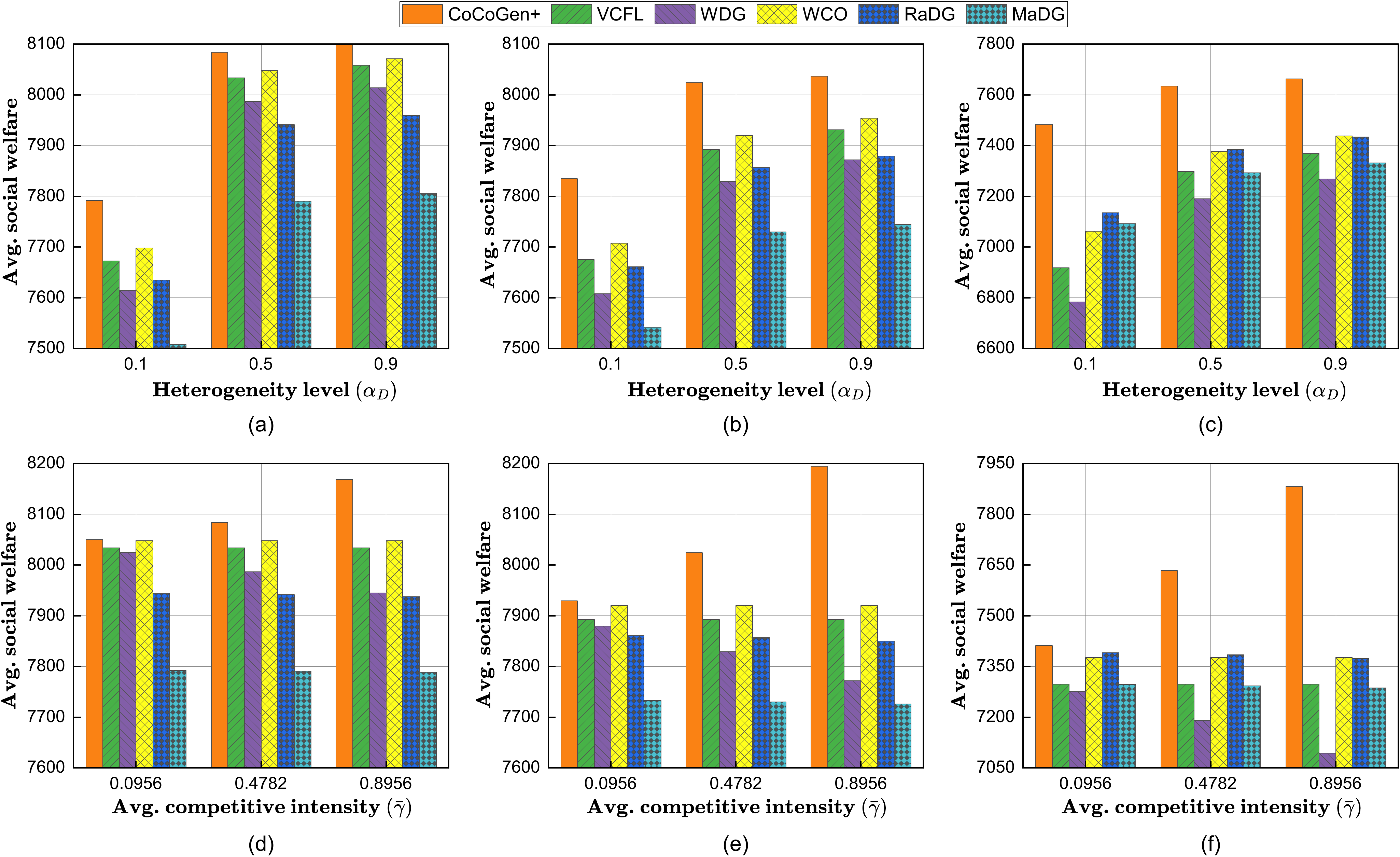}
	\caption{Efficiency of {\proposed} compared to baseline methods with (top) varying $\alpha_D$ and $\gamma=0.4782$ and (bottom) varying $\gamma$ and $\alpha_D=0.5$ for three datasets FMNIST, CIFAR-10, and CIFAR100 from left to right.}
    \label{fig:social_welfare_efficiency}
\end{figure*} 
Fig.~\ref{fig:social_welfare_efficiency} evaluates the effectiveness of {\proposed} in terms of social welfare against baseline methods on Fashion-MNIST, CIFAR-10, and CIFAR-100 under varying levels of competitive intensity $\bar\gamma$ and heterogeneity level $\alpha_D$ with $\xi = 90$. Organizations with less volume of data tend to generate more synthetic data and vice versa. Thanks to the payoff redistribution-based incentive mechanism, they can receive compensation for losses from competition along with the gains from the shared global model and compensation for computational costs.

\paragraph{Impact of data heterogeneity ($\alpha_D$)} {With a fixed value\footnote{{Similar trends are observed on other values of $\bar\gamma= \{0.0956, 0.8956\}$.}} of moderate competitive intensity $\bar\gamma = 0.4782$ (see Tab.~\ref{tab:parameters} and \ref{subsec:exp_setup}), Figs.~\ref{fig:social_welfare_efficiency}(a)-(c) show that social welfare improves across all methods as $\alpha_D$ increases, representing lower heterogeneity on three datasets}. Especially, {\proposed} outperforms all baselines with the performance gap widening significantly as data distributions become more heterogeneous (smaller $\alpha_D$). This underscores the effectiveness of {\proposed} in optimizing the volume of synthetic data generation based on local resources and competition intensity among organizations. Conversely, {\radg} and {\madg} yield the lowest welfare, indicating that randomized and aggressive data generation, respectively, incur high costs without proportional payoff. Consistent with learning task complexity, welfare is highest for Fashion-MNIST, followed by CIFAR-10, and finally CIFAR-100.

\paragraph{Impact of competitive intensity ($\bar\gamma$)} Under a fixed heterogeneity level ($\alpha_D = 0.5$) in Figs.~\ref{fig:social_welfare_efficiency}(d)-(f), {\proposed} achieves the highest social welfare as competition level intensifies, because its embedded payoff redistribution mechanism compensates organizations with higher contributions (e.g., those generating more data), thereby mitigating coopetitive losses and potentially reinforcing long-term participation incentives under stronger market rivalry. By comparison, social welfare on {\vcfl} and {\wco} remain unchanged since they do not account for the competition and largely provide only proportional returns gained from the collective global model. {\wdg} degrades with $\bar\gamma$, suggesting that insufficient or non-adaptive data generation fails to counteract heterogeneity-amplified competition effects, while {\radg} and {\madg} further reduce welfare due to inefficient data generation that increases costs and erodes business advantage under intense competition.

\subsection{{Discussion and Key Findings}}
The experimental results yield three principal findings that directly address the RQs posed in Section~\ref{sec:intro}.

\paragraph{Coupled effects of data heterogeneity ($\alpha_D$) and competition ($\gamma$) (\textbf{\#RQ1})} Non-IID severity and inter-organizational competition intensity levels jointly shape equilibrium strategies and social welfare outcomes. Under stronger competition, organizations are pushed toward a tournament-like behavior, where outperforming rivals yields higher marginal payoffs, thus increasing synthetic-data demand, thereby boosting the social welfare, especially under strong heterogeneity where class imbalance among organizations is severe. However, social welfare remains bounded by the statistical nature of local data, as the heterogeneity level increases (i.e., smaller $\alpha_D$), all methods benefit less from additional data, while costs and competitive losses become more pronounced, lowering welfare. In this regime, {\proposed} remains robust because payoff redistribution-based incentive internalizes part of the competitive externality and sustains incentives alignment across $\alpha_D$ and $\gamma$ settings.

\paragraph{Strategic data generation via potential game formulation (\textbf{\#RQ2})} Formulating GenAI-based data generation as a weighted potential game converts the equilibrium design problem into a tractable potential-minimization problem. The resulting pure-strategy Nash equilibrium generation levels, computed via Algorithm 1, are implementable under practical constraints (i.e., IR, budget balance) and consistently outperform all non-strategic baselines across the parameter space.

\paragraph{Budget-balanced payoff redistribution (\textbf{\#RQ3})} The $\xi$-scaled redistribution incentive mechanism effectively internalizes competitive externalities by rebalancing payoffs toward higher-contributing organizations (i.e., generating more data). Crucially, however, the optimal redistribution strength is task-dependent, where complex learning tasks benefit from moderate redistribution (e.g., $\xi^* = 90$ for CIFAR-100), whereas excessive redistribution uniformly suppresses contributions and degrades welfare. This non-monotonicity indicates that incentive design in coopetitive CFL must be jointly calibrated with the complexity of the downstream task and the severity of data heterogeneity, rather than specified as a single universal parameter.

\section{Conclusion}

In this paper, we studied CFL under the joint presence of coopetition, non-IID data, and payoff redistribution-based incentives, to understand their combined effects on organizational behaviors and system-wide social welfare on different learning tasks. To this end, we proposed {\proposed}, a coopetitive-compatible data generation and incentivization framework that captures both competition and non-IID through learning performance-driven and competition-induced utility formulation. We showed that each training round can be formulated as a weighted potential game, which enables the derivation of GenAI-based synthetic data generation strategies that maximize social welfare under practical constraints. Experimental results across multiple learning tasks demonstrated that stronger competition (i.e., $\gamma \gg$) and milder data heterogeneity (i.e., $\alpha_D \gg$) drive up synthetic data generation and improve social welfare, whereas severe heterogeneity and excessive payoff redistribution (i.e., $\xi \gg$) can reduce it. Furthermore, {\proposed} has achieved higher social welfare in comparison to baseline methods, showing its effectiveness.

\bibliographystyle{IEEEtran}
\bibliography{Refs}
\appendix
\subsection{Proof of Theorem 1}
\label{proof:theorem1}
\begin{proof} 
\label{proof:weighted_pg}
We adopt the forward method \cite{la2016potential}, where we consider the case where an organization $n$ can deviate from strategy \{$d_{n}^{\text{\text{gen}}}$\} to \{$d_{n}'^{,\text{gen}}$\}, while other organizations' strategy set \{$\boldsymbol{d}_{\boldsymbol{-n}}^{\text{\text{gen}}}$\} keeps unchanged. We have the following equality~\eqref{eq:proof_weighted_pg1}, shown at the bottom of the page.

\begin{figure*}[b]
\noindent\makebox[\linewidth]{\rule{\textwidth}{0.4pt}}
\begin{align}
\label{eq:proof_weighted_pg1}
 & {U}_{n}(d^\text{\text{gen}}_{n}, \boldsymbol{d}^\text{\text{gen}}_{\boldsymbol{-n}}) - {U}_{n}(d_{n}'^{,\text{gen}}, \boldsymbol{d}^\text{\text{gen}}_{\boldsymbol{-n}})  \nonumber\\ 
 & = \psi_n \left[\epsilon_0 - \esilontotaldgen \right]  + \sum_{n' \in \mathcal{N}} \gamma_{n, {n'}} (\xi -\phi_{n'}) \left[\esilondngen - \esilondgenother \right] 
 -  \kappa_n C_n^{\text{cmp}} \left[(\eta_n   + \mu_n)\dgen + \eta_n \dloc\right] f_n^2 \nonumber \\ &- \psi_n \left[\epsilon_0 - \esilontotaldgenscript  \right] - \sum_{n' \in \mathcal{N}} \gamma_{n, {n'}} (\xi -\phi_{n'}) \left[\esilondngenscript - \esilondgenother \right] 
 + \kappa_n C_n^{\text{cmp}} \left[(\eta_n   + \mu_n)\dgenscript + \eta_n d_{n}^{\text{loc}}\right] f_n^2 
\nonumber\\
& = \psi_n \left[\esilontotaldgenscript - \esilontotaldgen\right] +  \sum_{n' \in \mathcal{N}} \gamma_{n, {n'}} (\xi -\phi_{n'})\left[\esilondngen - \esilondngenscript\right] + \kappa_n C_n^{\text{cmp}}(\eta_n   + \mu_n)(\dgenscript - \dgen)f_n^2 
\nonumber\\
& = - \psi_n \left[\esilontotaldgen - \esilontotaldgenscript \right] + \left[\esilontotaldgen - \esilontotaldgenscript \right] \sum_{n' \in \mathcal{N}} \gamma_{n, {n'}} (\xi -\phi_{n'})
+ \kappa_n C_n^{\text{cmp}}(\eta_n   + \mu_n)(d_{n}'^{,\text{gen}} - d_{n}^{\text{gen}})f_n^2 
\nonumber\\
& = \left[\esilontotaldgen - \esilontotaldgenscript \right]  \underbrace{\biggl\{\sum_{n' \in \mathcal{N}} \left[\gamma_{n, {n'}} (\xi -\phi_{n'})\right] - \psi_n \biggr\}}_\text{$z_n$} + \kappa_n C_n^{\text{cmp}}(\eta_n   + \mu_n)(d_{n}'^{,\text{gen}} - d_{n}^{\text{gen}})f_n^2.
\end{align}
\end{figure*}
According to formulas \eqref{eq:theorem1} and \eqref{eq:proof_weighted_pg1}, we obtain
\begin{align}
\label{eq:proof_weighted_pg2}
F(\boldsymbol{d}^{\text{gen}}) - F(\boldsymbol{d}'^{\text{,gen}}) = \frac{1}{z_n} \left[{U}_{n}(d^\text{\text{gen}}_{n}, \boldsymbol{d}^\text{\text{gen}}_{\boldsymbol{-n}}) - {U}_{n}(d_{n}'^{,\text{gen}}, \boldsymbol{d}^\text{\text{gen}}_{\boldsymbol{-n}}) \right].
\end{align}
\cref{eq:proof_weighted_pg1} shows that any change of the $F(\boldsymbol{d}^{\text{gen}})$ is equal to the $\frac{1}{z_n}$-scaling change of the organization utility function caused by any unilateral deviation of the organization, which indicates that the $\mathcal{G}$ is a weighted potential game. This completes the proof.
\end{proof}

\subsection{Proof of Theorem 2}
\label{proof:theorem2}
We first prove that the optimization problem~\eqref{eq:optim_func2} is a strictly convex optimization problem by verifying that the Hessian matrix of $\potfunc$ is positive definite (i.e., $\bigtriangledown^2\mathcal{H}(\potfunc) \succ 0$). Specifically, we present the first and second partial derivatives of the objective function of problem~\eqref{eq:optim_func} as~\Crefrange{eq:1st_derivative_1} {eq:1st_derivative_3}, shown at the bottom of this page.
\begin{figure*}[b]
\noindent\makebox[\linewidth]{\rule{\textwidth}{0.4pt}}
\begin{align}
\label{eq:1st_derivative_1} & \frac{\partial \potfunc}{\partial \dgen} = -\frac{\alpha \beta}{N\varrho(\dloc+\dgen)^{\beta+1}} \expa - \frac{\kappa_n C_n^{\text{cmp}} (\eta_n  +\mu_n) f_n^2}{z_n}, \\
\label{eq:1st_derivative_2} & \frac{\partial^2 \potfunc}{\partial \dgen \partial \dgen} =  \Biggl(\frac{\alpha^2 \beta^2}{N^2\varrho^2 (\dloc + \dgen)^{2(\beta+1)}} + \frac{\alpha \beta (\beta+1)}{N\varrho(\dloc+\dgen)^{\beta+2}}\Biggr) \expa, \\
\label{eq:1st_derivative_3} & \frac{\partial^2 \potfunc}{\partial \dgen \partial \dgenother} = \frac{\alpha^2 \beta^2}{N^2\varrho^2 (\dloc + \dgen)^{\beta+1} (\dlocother + \dgenother)^{\beta+1}} \expa.
\end{align}
\end{figure*}

Let $\boldsymbol{Z}=\Bigl(\frac{\alpha\beta}{{N\varrho}(d^{{\text{loc}}}_1 + d^{{\text{gen}}}_1)^{\beta + 1}}, \frac{\alpha\beta}{{N\varrho}(d^{{\text{loc}}}_2 + d^{{\text{gen}}}_2)^{\beta + 1}},..., \\ \frac{\alpha\beta}{{N\varrho}(d^{{\text{loc}}}_N + d^{{\text{gen}}}_N)^{\beta + 1}}\Bigr)^T$, we have the expression of Hessian matrix as~\cref{eq:hessian_matrix}, shown at the bottom of the next page.
\begin{figure*}[b]
\noindent\makebox[\linewidth]{\rule{\textwidth}{0.4pt}}
\begin{align}
\label{eq:hessian_matrix}
    \bigtriangledown^2 \mathcal{H}(\potfunc) = & \expa \boldsymbol{Z} \boldsymbol{Z}^T \nonumber \\ 
    & + diag\Biggl( {N^2\varrho^2}(\dloc + \dgen)^{\beta + 2} \Biggr) \expa.
\end{align}
\end{figure*}

Given any vector $\boldsymbol{\zeta} = (\zeta_1, \zeta_2,...,\zeta_N)^T$ and $\boldsymbol{\zeta} \neq 0$, we have~\cref{eq:hessian_convex}, shown at the bottom of the next page. 
\begin{figure*}[b]
\noindent\makebox[\linewidth]{\rule{\textwidth}{0.4pt}}
\begin{align}
\label{eq:hessian_convex}
   \boldsymbol{\zeta}^T  \mathcal{H}(\potfunc) \boldsymbol{\zeta} = & \expa \Biggl(\sum_{n \in \mathcal{N}} \frac{\alpha \beta \zeta_n}{N\varrho (\dloc + \dgen)^{\beta +1}}\Biggr)^2 \nonumber \\
   & + \expa \frac{\alpha \beta (\beta +1)}{N\varrho} \sum_{n \in \mathcal{N}} \frac{\zeta_n^2}{(\dloc + \dgen)^{\beta+2}} > 0.
\end{align}
\end{figure*}
Therefore, the Hessian matrix of $\potfunc$ is positive definite, thereby $\potfunc$ is strictly convex.

The problem \eqref{eq:optim_func2} is convex and satisfies Slater’s condition; hence strong duality holds and the KKT conditions are necessary and sufficient for optimality. Moreover, since $\potfunc$  is strictly  convex on the feasible set, the optimal solution is unique. The corresponding Lagrangian of the optimization problem \eqref{eq:optim_func2} is as follows
\begin{align}
\label{eq:lagrangian}
    \mathcal{L}(\boldsymbol{d^{\text{gen}}}, \boldsymbol{\lambda}, \boldsymbol{\upsilon}) = & \expa \nonumber \\ & - \sum_{n \in \mathcal{N}} \frac{\kappa_n C_n^{\text{cmp}} (\eta_n  +\mu_n)\dgen f_n^2}{z_n} \nonumber \\ & - \sum_{n \in \mathcal{N}} \lambda_n(\dgen - \dgenmin) - \sum_{n \in \mathcal{N}} \upsilon_n(\dgenmax - \dgen),
\end{align}
where $\lambda_n \ge 0, \upsilon_n \ge 0, \forall n \in \mathcal{N}$ are the Lagrangian multiplier for the corresponding constraint. The partial derivative of function \eqref{eq:lagrangian} with respect to $\dgen$ is as follows:
\begin{align}
\label{eq:stationary}
\frac{\partial \mathcal{L}}{\partial \dgen} = & -\frac{\alpha \beta}{N \varrho(\dloc + \dgen)^{\beta +1}} \nonumber \\ & \times {\text{exp}\left(\frac{\frac{1}{N} \sum_{n \in \mathcal{N}}\left[\alpha(\dloc + \dgen)^{-\beta} - \delta \right]-1}{\varrho} \right)} \nonumber \\ & -
 \frac{\kappa_n C_n^{\text{cmp}} (\eta_n  +\mu_n) f_n^2}{z_n}
 - \lambda_n - \upsilon_n.
\end{align}

The optimal solution $d^{*\text{gen}}_n$ should satisfy the following equation:
\begin{align}
\label{eq:stationary1}
& -\frac{\alpha \beta}{N \varrho(\dloc + d^{*\text{gen}}_n)^{\beta +1}} \nonumber \\ & \times {\text{exp}\left(\frac{\frac{1}{N} \sum_{n \in \mathcal{N}}\left[\alpha(\dloc + \dgenopt)^{-\beta} - \delta \right]-1}{\varrho} \right)} \nonumber \\ & -
 \frac{\kappa_n C_n^{\text{cmp}} (\eta_n  +\mu_n) f_n^2}{z_n}
 - \lambda_n - \upsilon_n = 0.
\end{align}
Moreover, the following conditions can be obtained:
\begin{subequations}
\label{eq:kkt_conditions}
\begin{empheq}[left={\text{KKT conditions}}\empheqlbrace]{align}
        &\text{Eq.}~\eqref{eq:stationary1} \\ 
       & \dgen - \dgenmin \ge 0, \dgenmax - \dgen \ge 0, \label{eq:kkt_conditions_primal_feas}\\
       & \lambda_n(\dgen - \dgenmin) = 0,\label{eq:kkt_conditions_complementary_slackness1} \\
       & \upsilon_n(\dgenmax - \dgen) = 0, \label{eq:kkt_conditions_complementary_slackness2}\\ 
       & \lambda_n \ge 0, \upsilon_n \ge 0, \forall n \in \mathcal{N}. \label{eq:kkt_conditions_dual_feas}
\end{empheq}       
\end{subequations}
By analyzing the KKT conditions, three possible solutions to the optimization problem  \eqref{eq:optim_func2} can be presented as follows:

\noindent If $\lambda_n > 0$, the optimal solution $\dgenopt = \dgenmin$, and based on \eqref{eq:kkt_conditions_complementary_slackness2}, it means $\upsilon_n = 0$. Therefore, from \eqref{eq:stationary1}, the following condition can be obtained:
    \begin{align}
        \lambda_n = & -\frac{\alpha \beta}{N \varrho(\dloc + \dgenopt)^{\beta +1}} \nonumber \\ & \times \text{exp}\left(\frac{\frac{1}{N} \sum_{n \in \mathcal{N}}\left[\alpha(\dloc + \dgenopt)^{-\beta} - \delta \right]-1}{\varrho} \right) \nonumber \\ & - \frac{\kappa_n C_n^{\text{cmp}} (\eta_n  +\mu_n) f_n^2}{z_n} > 0.
    \end{align}
If $\upsilon_n > 0$, the optimal solution $\dgenopt = \dgenmax$, and based on \eqref{eq:kkt_conditions_complementary_slackness1}, it means $\lambda_n = 0$. Therefore,  from \eqref{eq:stationary1}, the following condition can be obtained:
    \begin{align}
        \upsilon_n = & \frac{\alpha \beta}{N \varrho(\dloc + \dgenopt)^{\beta +1}} \nonumber \\ & \times \text{exp}\left(\frac{\frac{1}{N} \sum_{n \in \mathcal{N}}\left[\alpha(\dloc + \dgenopt)^{-\beta} - \delta \right]-1}{\varrho} \right) \nonumber \\ &+ \frac{\kappa_n C_n^{\text{cmp}} (\eta_n  +\mu_n) f_n^2}{z_n} > 0.
    \end{align}
If $\lambda_n = 0$ and $\upsilon_n = 0$, the feasible solution $\dgenmin < \dgenopt < \dgenmax$ based on \eqref{eq:kkt_conditions_complementary_slackness1} and \eqref{eq:kkt_conditions_complementary_slackness2}. And, $\dgenopt$ should satisfy
    \begin{align}
    \label{eq:interior_point}
        & -\frac{\alpha \beta}{N \varrho(\dloc + \dgenopt)^{\beta +1}} \nonumber \\ & \times \text{exp}\left(-\frac{\frac{1}{N} \sum_{n \in \mathcal{N}}\left[\alpha(\dloc + \dgenopt)^{-\beta} - \delta \right]-1}{\varrho} \right) \nonumber \\ & - \frac{\kappa_n C_n^{\text{cmp}} (\eta_n  +\mu_n) f_n^2}{z_n} = 0
    \end{align}
\noindent Then, we obtain the following necessary and sufficient condition for $\dgenopt$:
    \begin{align}
    \label{eq:dgenopt_kkt}
    \dgenopt = & \Bigg[-\frac{\kappa_n C_n^{\text{cmp}} (\eta_n  +\mu_n) f_n^2 N \varrho}{\alpha \beta z_n} \nonumber \\ & \times \text{exp}\left(-\frac{\frac{1}{N} \sum_{n \in \mathcal{N}}\left[\alpha(\dloc + \dgenopt)^{-\beta} - \delta \right]-1}{\varrho} \right)\Bigg]^{-\frac{1}{\beta +1}} \nonumber \\ &- \dloc.
     \end{align}
\noindent Based on Eq.~\eqref{eq:dgenopt_kkt}, we have the following equation:
\begin{align}
\label{eq:fpi1}
       \dgen + \dloc = & \Bigg[-\frac{\kappa_n C_n^{\text{cmp}} (\eta_n  +\mu_n) f_n^2 N \varrho}{\alpha \beta z_n} \nonumber \\ & \times \text{exp}\left(\frac{\frac{1}{N} \sum_{n \in \mathcal{N}}\left[\alpha(\dloc + \dgen)^{-\beta} - \delta \right]-1}{\varrho} \right) \Bigg]^{-\frac{1}{\beta +1}}.
\end{align}
For the ease of presentation, we define the following variables:
\begin{subequations}
\begin{empheq}[left={}\empheqlbrace]{align}
& A_1 = \frac{1}{N} \sum\nolimits_{n \in \mathcal{N}}\left[\alpha(\dloc + \dgen)^{-\beta} - \delta \right], \\ 
& A_2 =  {\kappa_n C_n^{\text{cmp}} (\eta_n  +\mu_n) f_n^2}/{z_n}, \\ 
& A_3 = (\dloc + \dgen)^{-\beta - 1}.
\end{empheq}
\end{subequations} 
Because Eq.~\eqref{eq:fpi1} of $\dgenopt$ is transcendental, we propose an update rule for $\{u_n^{(k)}\}_{n=1}^{N}$ using FPI as follows: 
\begin{align}
 \forall{n} \in \{&1,...,N\}, u_n \delequal \dgen + \dloc, \nonumber \\ \quad u_n^{(k+1)} & = \left[-\frac{A_2 N \varrho}{\alpha \beta} \text{exp}\left(-\frac{\frac{1}{N} \sum\nolimits_{n \in \mathcal{N}}\left[\alpha(u_n^{(k)})^{-\beta} - \delta \right]- 1}{\varrho} \right)\right] ^ {-\frac{1}{\beta+1}} \nonumber \\ & = \textit{M}(u_n^{(k)}). \nonumber 
\end{align}
\noindent Based on the definition of the standard interference function in \cite{yates2002framework}, we have the following proposition to guarantee the convergence of our proposed fixed point iteration. 
\begin{proposition}
\label{pro:interference_func}
$\{\mapping\}_{n=1}^{N}$ converges to an unique fixed point $u^*_n$.
\end{proposition}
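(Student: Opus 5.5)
The plan is to verify the three defining properties of a standard interference function in the sense of \cite{yates2002framework} for the vector map $\boldsymbol{M}(\boldsymbol{u}) = (\textit{M}(u_1),\dots,\textit{M}(u_N))$. Where one of them fails, I will fall back on a contraction argument that also yields existence and uniqueness of the fixed point.

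\textbf{Rewriting the map.} Write $c_n \delequal -\frac{A_2 N\varrho}{\alpha\beta}$. Since $z_n<0$ by Theorem~\ref{def:weighted_pg} and all of $\kappa_n, C_n^{\text{cmp}}, \eta_n, \mu_n, f_n$ are positive, we have $A_2<0$ and hence $c_n>0$. Let $A_1(\boldsymbol{u})=\frac{1}{N}\sum_{m}[\alpha u_m^{-\beta}-\delta]$. The update can then be written as
\begin{align}
\textit{M}(u_n) = c_n^{-\frac{1}{\beta+1}}\exp\left(\frac{A_1(\boldsymbol{u})-1}{\varrho(\beta+1)}\right). \nonumber
\end{align}

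\textbf{Checking the interference properties.} Positivity, $\textit{M}(u_n)>0$, is immediate because the map is an exponential times a positive constant.

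Scalability, $\textit{M}(\lambda\boldsymbol{u})<\lambda \textit{M}(\boldsymbol{u})$ for $\lambda>1$, also holds. Each $u_m^{-\beta}$ decreases under scaling, so $A_1(\lambda\boldsymbol{u})<A_1(\boldsymbol{u})$. Hence $\textit{M}(\lambda\boldsymbol{u})<\textit{M}(\boldsymbol{u})<\lambda\textit{M}(\boldsymbol{u})$.

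Monotonicity is the delicate point. Because $A_1$ is decreasing in every $u_m$, the map $\boldsymbol{M}$ is \emph{antitone} rather than monotone increasing. My first attempt is to apply the Yates argument to the two-step map $\boldsymbol{M}\circ\boldsymbol{M}$, which is monotone increasing, positive and scalable. This gives convergence of the even and odd subsequences to fixed points of $\boldsymbol{M}\circ\boldsymbol{M}$; I would then still need to show these two limits coincide.

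\textbf{Main route: contraction on the box.} To close the gap above, and as the main argument, I will show that $\boldsymbol{M}$ is a contraction on the compact box
\begin{align}
\mathcal{B}=\prod_n[\dloc+\dgenmin,\ \dloc+\dgenmax]. \nonumber
\end{align}
This box is exactly the region Algorithm~\ref{alg:fpi_method} works on after clipping, and clipping is itself nonexpansive. The Jacobian entries are
\begin{align}
\frac{\partial \textit{M}(u_n)}{\partial u_m} = -\textit{M}(u_n)\,\frac{\alpha\beta}{N\varrho(\beta+1)}\,u_m^{-\beta-1}. \nonumber
\end{align}
Summing over $m$, the $\infty$-norm of the Jacobian on $\mathcal{B}$ is bounded by
\begin{align}
\max_n \textit{M}(u_n)\,\frac{\alpha\beta}{\varrho(\beta+1)}\,\max_m u_m^{-\beta-1}. \nonumber
\end{align}
Here $\textit{M}(u_n)$ is itself bounded on $\mathcal{B}$, since $A_1\le \alpha\min_m u_m^{-\beta}$.

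\textbf{Main obstacle.} The hard step is showing that this bound is strictly less than $1$. This is where the model constants enter: $u_m\ge \dloc\ge 10$, $\varrho=6$, and the factor $u_m^{-\beta-1}$ is small. I would state the requirement explicitly as a mild parameter condition, which the settings in Table~\ref{tab:parameters} satisfy.

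\textbf{Conclusion.} Once $\boldsymbol{M}$ is shown to be a contraction on $\mathcal{B}$, Banach's fixed-point theorem gives a unique fixed point $\boldsymbol{u}^*$ and convergence from any initialization. Uniqueness is also consistent with the strict convexity of $F$ established above: any fixed point satisfies the interior KKT equation \eqref{eq:fpi1}, and the minimizer of \eqref{eq:optim_func2} is unique. Therefore $u_n^*=\dgenopt+\dloc$.
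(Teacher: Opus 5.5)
You are right that $\boldsymbol{M}$ is antitone, and this is exactly what the paper's own proof glosses over. It verifies positivity, that $\textit{M}$ is \emph{strictly decreasing}, and scalability, and then cites \cite{yates2002framework}. Yates' theorem requires the map to be order-preserving, so that argument does not close the gap either.

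Your replacement, however, has a genuine gap at the step you call the main obstacle, and the condition you propose is not mild. Your $\infty$-norm bound can be rewritten as $\frac{\beta}{\varrho(\beta+1)}\cdot\alpha u_{\min}^{-\beta}\cdot\max_n\textit{M}(u_n)/u_{\min}$, where $u_{\min}$ is the smallest coordinate on $\mathcal{B}$. This is a loss level times the ratio between the largest iterate and the smallest data volume. That ratio is driven by exactly the heterogeneity the model is built to allow: $\dloc$ can be as small as $10$ while $u_n$ can reach $\dloc+\dgenmax$. So your claim that the settings of Table~\ref{tab:parameters} make the bound smaller than $1$ is unverified and need not hold. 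The fallback through $\boldsymbol{M}\circ\boldsymbol{M}$ does not rescue this: that map is order-preserving but need not be scalable.

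Two smaller points. First, $\boldsymbol{M}$ does not map $\mathcal{B}$ into itself, so Banach applies only to the clipped map. Its fixed points satisfy the box KKT system, not \eqref{eq:fpi1}. They do coincide with the unique minimizer, because $\partial F/\partial d^{\text{gen}}_n>0$ iff $\textit{M}(u_n)<u_n$ (and symmetrically), but you must say so. Second, the bound on $A_1$ should read $\alpha(\min_m u_m)^{-\beta}$.

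The missing idea is the rank-one structure of the map: $\textit{M}(u_n)=K_n\,\sigma(\boldsymbol{u})$, with $K_n=c_n^{-1/(\beta+1)}$ and a scalar $\sigma(\boldsymbol{u})=\exp\bigl(\frac{A_1(\boldsymbol{u})-1}{\varrho(\beta+1)}\bigr)$ shared by all organizations. After one step every iterate lies on the ray $\{s\boldsymbol{K}: s>0\}$, and the iteration becomes $s_{k+1}=g(s_k)=\exp(a s_k^{-\beta}-b)$ with $a=\frac{\alpha\sum_m K_m^{-\beta}}{N\varrho(\beta+1)}$ and $b=\frac{1+\delta}{\varrho(\beta+1)}$. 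Since $g$ is strictly decreasing, with $g(0^+)=\infty$ and $g(s)\to e^{-b}$ as $s\to\infty$, the fixed point $s^*$ exists and is unique with no extra assumption; hence so is $\boldsymbol{u}^*=s^*\boldsymbol{K}$.

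Convergence is another matter. We have $g'(s^*)=-\frac{\beta}{(\beta+1)\varrho}\cdot\frac{\alpha}{N}\sum_m(u_m^*)^{-\beta}$, i.e., $-\frac{\beta}{\beta+1}$ times the equilibrium average of $\epsilon_m+\delta$, divided by $\varrho$. If this exceeds $1$ in modulus (e.g., for small $\varrho$), $s^*$ is repelling. Because $g\circ g$ is increasing on the invariant interval $[e^{-b},g(e^{-b})]$, every run not started at $s^*$ then converges to a genuine $2$-cycle.

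So the statement cannot be proved unconditionally. The right hypothesis is $|g'|<1$ on that interval, or at $s^*$ for local convergence. This is plausible in the experimental regime, where $\varrho=6$ exceeds the average loss. If you want to keep the contraction language, use the weighted norm $\max_n|v_n|/K_n$. On the ray, the induced norm of the Jacobian equals $|g'(s)|$ exactly, so the heterogeneity ratio that spoils your $\infty$-norm bound disappears.
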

\begin{proof}
We prove that $\{\mapping\}_{n=1}^{N}$ satisfies the following properties:
\begin{itemize}
    \item Positivity: $\mapping > 0$,
    \item Strictly decreasing: if $u_n > u_{n'}$ then $\mapping$ < $\textit{M}(u_{n'})$\footnote{The more total local data $u_n$ yields better global model performance (i.e., lower loss), so $\mapping$ must be strictly decreasing.},
    \item Scalability: $\forall \varphi > 1, \varphi \mapping > \textit{M}(\varphi u_n)$.
\end{itemize}
\noindent With $u_n = \dgen + \dloc > 0$, we have $\mapping >0$, so positivity holds.

\noindent Regarding monotonicity and scalability, we take the first derivative of the $\mapping$: 
\begin{align}
\frac{\partial \mapping}{\partial u_n} = & \frac{A_2A_3}{\beta + 1}\Bigg[-\frac{A_2 N \varrho}{\alpha \beta} \nonumber \\ \times & \text{ exp}\Big(-\frac{\frac{1}{N} \sum\nolimits_{n \in \mathcal{N}}\left[\alpha(u_n^{-\beta} - \delta \right]- 1}{\varrho} \Big)\Bigg] ^ {-\frac{1}{\beta+1}-1} \nonumber \\ \times & \text{ exp}\left(-\frac{A_1-1}{\varrho}\right) < 0,
\end{align}
showing that $\mapping$ is a strictly decreasing function. Therefore, we have 1) if $u_n > u_{n'}$ then $\mapping < \textit{M}(u_{n'})$, and 2) $\forall \varphi > 1, \varphi \mapping  > \mapping > \textit{M}(\varphi u_n)$. This completes the proof.
\end{proof}

\end{document}